\documentclass{article}

 \usepackage[preprint]{paper_style}

\usepackage[utf8]{inputenc} % allow utf-8 input
\usepackage[T1]{fontenc}    % use 8-bit T1 fonts
\usepackage{hyperref}       % hyperlinks
\usepackage{url}            % simple URL typesetting
\usepackage{booktabs}       % professional-quality tables
\usepackage{amsfonts}       % blackboard math symbols
\usepackage{nicefrac}       % compact symbols for 1/2, etc.
\usepackage{microtype}      % microtypography
\usepackage{xcolor}         % colors
\usepackage{colortbl}

\usepackage{amsmath}
\usepackage{amssymb}    % (optional but recommended for math symbols)
\usepackage{multirow}
\usepackage{graphicx}
\usepackage{subcaption}
\usepackage{pifont}
\usepackage{listings}
\usepackage{xcolor}
\usepackage{wrapfig}
\usepackage{multicol}
\newcommand{\cmark}{\ding{51}}
\newcommand{\xmark}{\ding{55}}
\usepackage[vlined, ruled, linesnumbered]{algorithm2e}
\DeclareMathOperator*{\argmax}{argmax}
\DeclareMathOperator*{\argmin}{argmin}
\SetKwBlock{DataCollection}{\normalfont\textsc{Data Collection}:}{}
\SetKwBlock{PolicyUpdate}{\normalfont\textsc{Policy Update}:}{}
\SetKwBlock{WorstCaseCritic}{\normalfont\textsc{Worst-Case Critic Update}:}{}
\SetKwBlock{CriticUpdate}{\normalfont\textsc{Critic Update}:}{}
\SetKwBlock{TrainAdversary}{\normalfont\textsc{Train the adversary}:}{}
\newtheorem{lemma}{Lemma}
\title{Robust Policy Optimization via Adversarial Importance Sampling}

\author{%
  Amine Andam \\
  Mohammed VI Polytechnic University\\
  \texttt{andamamine83@gmail.com} \\
  \And
  Jamal Bentahar \\
  Khalifa University \\
  Concordia University \\
  \texttt{jamal.bentahar@ku.ac.ae} \\
  \And
  Mustapha Hedabou \\
  Mohammed VI Polytechnic University \\
  \texttt{mustapha.hedabou@um6p.ma} \\
}

\begin{document}

\maketitle

\begin{abstract}
    Significant progress has been made in safeguarding deep reinforcement learning (DRL) policies against input perturbations. Developing robust DRL involves three main stages: algorithm design, implementation, and evaluation. In this work, we identify and address a key limitation at each stage. First, we introduce Adversarial Importance Sampling (Advis), a method that uses importance sampling over trajectories from standard training to estimate and optimize verifiable worst-case returns. Advis satisfies three desirable criteria not jointly achieved by prior work: it requires no additional environment interactions, no auxiliary networks, and captures long-term robustness. Second, we introduce advrl, a modular PyTorch library that provides clean, single-file implementations of existing robustness methods and adversarial attacks, facilitating rapid prototyping and enabling reproducible and traceable evaluations. Third, we revisit evaluation under learned adversaries and show that optimal adversarial hyperparameters do not transfer across agents, which can lead to an overestimation of robustness when using a limited set of attacker configurations. Accordingly, we evaluate policies against a large and diverse set of attackers, using 6–14× more configurations than prior work. Finally, we evaluate our approach on continuous control environments, demonstrating its effectiveness relative to existing baselines. The code is available at: \url{https://github.com/AmineAndam04/advrl}
\end{abstract}

\section{Introduction}
Deep Reinforcement Learning (DRL) agents are known to be vulnerable to adversarially crafted input perturbations that cause them to take suboptimal or even catastrophic decisions \cite{DBLP:conf/ijcai/LinHLSLS17,DBLP:conf/iclr/HuangPGDA17,DBLP:conf/iclr/KosS17}. Typically, to robustify a DRL algorithm, one either modifies how samples are collected by adding perturbed samples \cite{DBLP:journals/corr/abs-1712-03632,DBLP:conf/iclr/ZhangCBH21,DBLP:conf/iclr/SunZLH22}, or modifies the learning objective by adding a regularization term designed to promote robustness \cite{DBLP:conf/nips/0001CX0LBH20,DBLP:conf/nips/OikarinenZMDW21,DBLP:conf/nips/LiangSZH22}. An analysis of the current literature shows that most existing methods suffer from at least one of the following issues (see Table \ref{tab:compare-work}): (1) they require collecting additional samples from the environment, usually to train an adversary using DRL (2) they require training additional neural networks to estimate worst-case performance or to learn an adversarial policy (3) they focus on single-step robustness, overlooking the long-term effects of adversarial perturbations.

Our main contribution is an algorithm that avoids these three limitations by combining two techniques: convex relaxation of neural networks \cite{DBLP:conf/nips/XuS0WCHKLH20} and importance sampling \cite{owen2013monte,DBLP:books/lib/SuttonB2018}. Given a policy $\pi$, convex relaxation is used to compute a provably worst-case policy distribution $\pi^{\mathrm{wc}}$ ( i.e., the policy an agent would follow under an optimal adversary). We then estimate the discounted return under $\pi^{\mathrm{wc}}$ via importance sampling from trajectories already collected under $\pi$ during standard DRL training. By optimizing this worst-case estimate during training, the agent learns policies that are robust to adversarial perturbations. This approach allows us to reuse existing trajectories, avoid training additional networks, and explicitly optimize long-term worst-case performance.

Beyond the algorithmic perspective, we also examine the current literature from an implementation standpoint. We found that existing codebases are often complex and not well suited for prototyping. New methods are typically implemented on top of existing codebases, which themselves are built on earlier implementations. This results in monolithic code structures where a single file contains entangled implementations of multiple robustness methods, while also handling both training and evaluation. Furthermore, there is no unified codebase that implements most standard baselines and adversarial attacks. Thus, benchmarking and adversarial evaluations often require combining multiple codebases. To address these issues, we introduce \texttt{advrl}, a modular PyTorch library that provides clean single file implementations of existing methods, along with tools for running large-scale evaluations. We design the library to support fast prototyping and reproducible evaluations.

Finally, we identify a methodological concern in how existing work benchmarks robustness against learned adversaries (i.e., adversaries that train neural networks or policies to generate perturbations). The effectiveness of a learned adversary depends heavily on the hyperparameters used during its training, consequently, adversarial evaluation strongly depends on how these hyperparameters are tuned. Importantly, we show that the hyperparameter configurations that produce the strongest adversaries against one algorithm are often ineffective against another algorithm. As a result, evaluating robustness against a small set of adversarial configurations can lead to misleading conclusions: an agent may appear robust simply because the adversaries it was tested against were insufficiently tuned, not because the agent is inherently robust. To address this, we evaluate against a large and diverse population of adversarial configurations: between 432 and 1296 distinct attackers per agent, compared to the 30 to 200 typically used in prior work. We focus on robustifying policy optimization algorithms and run extensive experiments on MuJoCo environments. We demonstrate that our approach consistently outperforms existing methods.

\section{Background and related work}
\begin{table}[t]
    \centering
    \caption{Comparison of robust DRL methods. \textbf{\# Samples}: whether more samples than vanilla training are needed; \textbf{\# Networks}: whether additional networks are required; \textbf{Long-term}: whether the method optimizes for long-term robustness. $\uparrow$/$=$: more/same as vanilla training; \cmark/\xmark: yes/no.}
    \begin{tabular}{lccc}
    \toprule
    Method & \# Samples & \# Networks & Long-term \\
    \midrule
    SA-ATLA~\cite{DBLP:conf/iclr/ZhangCBH21}      & $\uparrow$ & $\uparrow$ & \cmark \\
    PA-ATLA~\cite{DBLP:conf/iclr/SunZLH22}         & $\uparrow$ & $\uparrow$ & \cmark \\
    SA-Reg \cite{DBLP:conf/icml/ShenLJWZ20,DBLP:conf/nips/0001CX0LBH20} & $=$ & $=$ & \xmark \\
    Radial \cite{DBLP:conf/nips/OikarinenZMDW21}   & $=$ & $=$ & \xmark \\
    WocaR-RL \cite{DBLP:conf/nips/LiangSZH22}      & $=$ & $\uparrow$ & \cmark \\
    ACoE~\cite{DBLP:conf/iclr/BelaireSV25}       & $=$ & $\uparrow$ & \cmark \\
    \midrule
    \textbf{Ours} & $=$ & $=$ & \cmark \\
    \bottomrule
    \end{tabular}
    \label{tab:compare-work}
\end{table}
\subsection{Preliminaries} 

\paragraph{Deep Reinforcement Learning}
A Markov Decision Process (MDP) \cite{DBLP:books/wi/Puterman94} is defined by the tuple $(\mathcal{S}, \mathcal{A}, P, R, \rho_0, \gamma)$, where $\mathcal{S}$ and $\mathcal{A}$ denote the state and action spaces, $P: \mathcal{S} \times \mathcal{A} \to \Delta(\mathcal{S})$ is the transition probability distribution, $R: \mathcal{S} \times \mathcal{A} \to \mathbb{R}$ is the reward function, $\rho_0: \mathcal{S} \to [0,1]$ is the distribution of the initial state, and $\gamma \in [0,1)$ is the discount factor . A policy $\pi: \mathcal{S} \rightarrow \Delta(\mathcal{A})$ specifies a distribution over actions conditioned on the current state. We write $\tau \sim \pi$ to denote that a trajectory $\tau = (s_0, a_0,r_1, s_1, a_1, \dots)$ is generated by following policy $\pi$, where $s_0 \sim \rho_0$, $a_t \sim \pi(\cdot \mid s_t)$, and $s_{t+1} \sim P(\cdot \mid s_t, a_t)$ $\forall t \geq 0$. The goal of reinforcement learning is to find a policy $\pi$ that maximizes the expected discounted return $\mathcal{J}(\pi)=\mathbb{E}_{\tau\sim\pi}\!\left[R(\tau)\right]=\mathbb{E}_{\tau\sim\pi}\!\left[\sum_{t=0}^{\infty}\gamma^t R(s_t,a_t)\right]$. Policy gradient methods address this problem by directly optimizing a parameterized policy $\pi_\theta$ via gradient ascent on $\mathcal{J}(\pi_\theta)$. Proximal Policy Optimization (PPO) \cite{DBLP:journals/corr/SchulmanWDRK17} is a widely used policy optimization method with state-of-the-art performance. Its core idea is to increase the probability of actions that are better than average ($A^\pi(s,a) > 0$) while decreasing that of actions that are worse than average ($A^\pi(s,a) \leq 0$), in a stable manner using the following loss:
\begin{equation}
\mathcal{L}^{\text{PPO}}(\theta) =
\mathbb{E}_{(s_t,a_t)} \left[-
\min \left(
\frac{\pi_\theta(a_t \mid s_t)}{\pi_{\theta_{\text{old}}}(a_t \mid s_t)} A_t,\;
\mathrm{clip}(\frac{\pi_\theta(a_t \mid s_t)}{\pi_{\theta_{\text{old}}}(a_t \mid s_t)}, 1 - \epsilon, 1 + \epsilon) A_t
\right)
\right]
\end{equation}

\paragraph{Importance sampling (IS)}
IS \cite{owen2013monte,DBLP:books/lib/SuttonB2018,DBLP:journals/jmlr/MetelliPMR20} is a technique commonly used in RL to estimate the performance of a target policy $\pi$ using trajectories generated by a different policy $b$. Given trajectories $\{ \tau_i \sim b \}_{i=1}^N$ collected under the behavior policy $b$, we estimate the performance of $\pi$ as follows:
\begin{equation}
    \label{eq:ois}
    \hat{J}_{\mathrm{IS}}(\pi) = \frac{1}{N} \sum_{i=1}^N \rho_{T-1}(\tau_i)\, R(\tau_i),
\end{equation}
where $\rho_t(\tau) = \prod_{k=0}^{t} \frac{\pi(a_k \mid s_k)}{b(a_k \mid s_k)}$ is the importance weight, and $R(\tau_i)$ is the return of trajectory $\tau_i$. 

\paragraph{Adversarial reinforcement learning}
We consider standard adversarial attacks on the observations of DRL agents \cite{DBLP:conf/nips/0001CX0LBH20,DBLP:conf/iclr/SunZLH22,DBLP:conf/nips/LiangSZH22}, in which an adversary perturbs the agent’s state observations within a bounded region during test-time. Instead of receiving the true state of the environment $s$, the agent receives a perturbed state $\tilde{s}$ and selects actions according to $\pi(.|\tilde{s})$ rather than $\pi(.|s)$. We specify an attack budget $\epsilon$ and restrict the perturbations to be $\ell_\infty$-bounded: $\tilde{s} \in \mathcal{B}_{\epsilon}(s) = \{ x \in \mathcal{S} : \|x - s\|_{\infty} \leq \epsilon \}$. The interaction between the agent and the environment in the presence of such an adversary can be formalized using State Adversarial MDP (SA-MDP) \cite{DBLP:conf/nips/0001CX0LBH20}, which extends the standard MDP by incorporating an adversary that perturbs the observed states. Formally, given an MDP and an adversary $v : \mathcal{S} \rightarrow \Delta(\mathcal{S})$, the resulting SA-MDP is defined by the tuple $(\mathcal{S}, \mathcal{A}, \mathcal{B}_{\epsilon}, \mathcal{P}, \mathcal{R}, \rho_0, \gamma)$, where $\mathcal{B}_{\epsilon}$ specifies the threat model, i.e., $v(s) \in \mathcal{B}_{\epsilon}(s)$, and the agent selects actions according to $\pi\left(.|v(s)\right)$.

\paragraph{Convex relaxation of neural networks} Convex relaxation methods \cite{DBLP:conf/nips/XuS0WCHKLH20} enable the computation of provable bounds of neural network outputs. Given a network $f : \mathbb{R}^n \to \mathbb{R}^k$ and a perturbation set 
$\mathcal{B}_{\epsilon}(x_0)$ around the input $x_0$,  we can find element-wise bounds $\underline{f}, \overline{f} \in \mathbb{R}^k$ such that:
\begin{equation}
    \label{eq:convex-relax}
    \underline{f} \leq f(x) \leq \overline{f} \quad \forall x \in \mathcal{B}_{\epsilon}(x_0)
\end{equation}
Computing exact output bounds is NP-complete \cite{DBLP:conf/cav/KatzBDJK17}, thus practical methods rely on relaxations of the exact problem. Interval Bound Propagation (IBP) \cite{DBLP:journals/corr/abs-1810-12715} propagates interval bounds layer by layer and is computationally efficient but loose. In contrast, linear relaxation methods such as CROWN \cite{DBLP:conf/nips/ZhangWCHD18} approximate nonlinear activations with linear constraints, yielding tighter but more costly bounds.

\subsection{Related work}
We study the problem of training DRL agents robust to state perturbations. Existing approaches can be broadly divided into two families: regularization-based and attack-driven methods. Regularization-based methods augment the standard DRL objective with a robustness term:
\begin{equation}
    \mathcal{L}_{\mathrm{DRL}} + \kappa \mathcal{L}_{\mathrm{robust}},  \quad \kappa > 0
\end{equation}
These methods differ mainly in the design of $\mathcal{L}_{\mathrm{robust}}$. Typical choices include losses that optimize worst-case performance under perturbations, or penalize discrepancies between policy distributions on clean and perturbed inputs.

Attack-driven methods instead modify how training data is collected: the agent is trained on adversarial trajectories generated under a state adversary $v$:
$\tilde{\tau}=(\dots , \tilde{s_t} =v(s_t), a_t \sim \pi(.| \tilde{s_t}),r_t, \dots)$
The main focus of this line of work is to design optimal adversaries $v$. By exposing the agent to strong perturbations during training, the resulting policy becomes robust and maintains high performance under attack at test time. Below, we review both categories in detail.

\paragraph{SA-Reg \cite{DBLP:conf/icml/ShenLJWZ20,DBLP:conf/nips/0001CX0LBH20}} 
The core idea of SA-Reg is to minimize the discrepancy between $\pi(.|s_t)$ and $\pi(.|\tilde{s_t})$ using a divergence measure $\mathcal{D}$ : 
\begin{equation}
    \label{eq:sa-reg-loss}
    \mathcal{L}_{\mathrm{robust}}^{\mathrm{SA-Reg}} = \max_{\tilde{s_t} \in \mathcal{B}_{\epsilon}(s_t)} \mathcal{D}(\pi(.|s), \pi(.|\tilde{s}))
\end{equation}

\paragraph{Radial \cite{DBLP:conf/nips/OikarinenZMDW21}}: Radial minimizes an upper bound $\mathcal{L}^{\mathrm{Radial}}$ of the standard DRL objective under adversarial perturbations:  $\mathcal{L}^{\text{PPO}}(\tilde{s};\theta) \leq \mathcal{L}^\mathrm{Radial}(s;\theta),  \quad \forall \tilde{s} \in \mathcal{B}_\epsilon(s) $. To construct this bound for policy gradient methods, Radial replaces the current policy $\pi_\theta$ in the loss with a \emph{worst-case policy} $\pi^{\mathrm{wc}}_\theta$. For PPO:
\begin{equation}
    \mathcal{L}^\mathrm{Radial}(\theta) = \mathbb{E}_{(s_t,a_t)} \left[ -
    \min \left(
    \frac{\pi^{\mathrm{wc}}_\theta(a_t \mid s_t)}{\pi_{\theta_{\text{old}}}(a_t \mid s_t)} A_t,
    \mathrm{clip}\big(\frac{\pi^{\mathrm{wc}}_\theta(a_t \mid s_t)}{\pi_{\theta_{\text{old}}}(a_t \mid s_t)}, 1 - \epsilon, 1 + \epsilon\big)A_t
    \right)
    \right]
\end{equation}
Radial constructs $\pi^{\mathrm{wc}}$ by assigning minimal probability to favorable actions ($A_t > 0$) and maximal probability to unfavorable ones ($A_t \leq 0$). For a Gaussian policy $\pi_\theta(.| s) = \mathcal{N}(\mu_{\theta}(s), \Sigma_{\theta})$, and a perturbation set $\mathcal{B}_\epsilon(s)$, the worst-case policy has the following closed form \cite{DBLP:conf/nips/OikarinenZMDW21}:
\begin{equation}
\label{radial-wc-policy}
\pi^{\mathrm{wc}}_{\theta}(a_t \mid s_t) =
    \begin{cases} 
    \mathcal{N}(a_t; \mu^{+}_{\theta}(s_t), \Sigma_{\theta}), & \text{if } A_t > 0 \\
    \mathcal{N}(a_t; \mu^{-}_{\theta}(s_t), \Sigma_{\theta}), & \text{otherwise}
    \end{cases}
\end{equation}
where $\mu^{+}_{\theta} = \arg\max_{\mu \in [\underline{\mu}_{\theta}(s);\overline{\mu}_{\theta}(s)]} \| a_t - \mu \|^2_{\Sigma_{\theta}}, \quad \mu^{-}_{\theta} = \arg\min_{\mu \in [\underline{\mu}_{\theta}(s);\overline{\mu}_{\theta}(s)]} \| a_t - \mu \|^2_{\Sigma_{\theta}}$. And $\underline{\mu}_{\theta}$ , $\overline{\mu}_{\theta}$ denote the lower and upper bound computed using convex relaxation tools. $\| x \|^2_{\Sigma} = x^\top \Sigma^{-1} x$. 
\paragraph{SA-ATLA~\cite{DBLP:conf/iclr/ZhangCBH21}}:  SA-ATLA is an attack-driven method that learns optimal adversaries using DRL. The core idea is to construct an adversarial MDP $\overline{M} = (S, \mathcal{A}_{\text{adv}},P, \overline{R},\gamma)$ whose optimal policy $v$ generates perturbations that maximally degrade the performance of a fixed victim policy $\pi$. In SA-ATLA, the adversary’s action space matches the state space: $ \mathcal{A}_{\text{adv}} = \mathcal{S}$. At each timestep, the adversary observes the current state $s_t$ and outputs a perturbed state $\tilde{s}_t =v(s_t)$. The agent then selects an action $a_t \sim \pi(\cdot \mid \tilde{s}_t)$ and receives reward $r_t$, while the adversary receives the negated reward $\overline{r}_t = -r_t$. Training follows an alternating optimization scheme, where the adversary $v$ is optimized against a fixed policy $\pi$, and $\pi$ is subsequently updated using trajectories generated under the learned adversary. Although this formulation provides theoretical guarantees to find optimal adversaries, it has two main limitations. First, the adversarial action space has the same dimensionality as the original state space, leading to prohibitive sample complexity and computational cost in high-dimensional environments. Second, SA-ATLA requires training a second RL agent, effectively doubling the cost by running two DRL algorithms.

\paragraph{PA-ATLA~\cite{DBLP:conf/iclr/SunZLH22}}: PA-ATLA uses the same alternating training method as SA-ATLA, but uses a more efficient adversary. It constructs an adversarial MDP $\overline{M} = (S, \mathcal{A}_{\text{adv}},P, \overline{R},\gamma)$ where the adversarial action space matches the agent’s action space $\mathcal{A}_{\text{adv}} = \mathcal{A}$. Rather than learning high dimensional state perturbations, the adversary outputs a worst-case action at each state $v(s_t) = \hat{a}_t$. Subsequently, a targeted FGSM attack is performed to find the state perturbation that pushes the agent to select the adversarial action $\hat{a}_t$. For a deterministic policy $\pi$, this corresponds to solving:
\begin{equation}
    \tilde{s}_t = \argmin_{\overline{s} \in \mathcal{B}_{\epsilon}(s_t)} \|\pi(\overline{s}) - \hat{a}_t\|^2
\end{equation}

\paragraph{WocaR-RL \cite{DBLP:conf/nips/LiangSZH22}}: WocaR-RL is a regularization-based approach that evaluates and optimizes the worst-case performance of a policy under adversarial perturbations. This is achieved by training a worst-case action-value network $\underline{Q}_{\phi}^{\pi}$ using updates similar to Bellman updates:
\begin{equation}
    \label{eq:update-q_worst}
    \mathcal{L}_{\underline{Q}}(\phi) = \frac{1}{N} \sum_{t=1}^{N} \left(y_t - \underline{Q}_{\phi}^{\pi}(s_t, a_t)\right)^2, \quad \text{where } y_t = r_t + \gamma \min_{\hat{a} \in \mathcal{A}_{\mathrm{adv}}(s_{t+1}, \pi)} \underline{Q}^{\pi}_{\phi^{-}}(s_{t+1}, \hat{a}).
\end{equation}
Here, $\mathcal{A}_{\mathrm{adv}}(s, \pi) = \{a \in \mathcal{A} : \exists\, \tilde{s} \in \mathcal{B}_{\epsilon}(s) \text{ s.t. } a \sim \pi(\cdot \mid \tilde{s})\}$ denotes the set of actions an adversary can induce via state perturbations. It can be computed efficiently using convex relaxation. The minimization $\min_{\hat{a} \in \mathcal{A}_{\mathrm{adv}}}$ is solved using 50-step projected gradient descent.

The policy $\pi$ is then optimized to favor actions with higher worst-case values by minimizing the following worst-case policy loss:
\begin{equation}
    \mathcal{L}_{\text{DRL-robust}}(\theta) =
    \mathbb{E}_{(s_t, a_t)} \left[
    - \min \left(
    r(\theta)\,(A_t + \kappa \underline{Q}_t),\;
    \mathrm{clip}\!\left(r(\theta),\, 1 - \epsilon,\, 1 + \epsilon\right)(A_t + \kappa \underline{Q}_t)
    \right)
    \right]
\end{equation}
In addition, WocaR-RL uses a weighted SA-Reg loss to prioritize \textit{critical states}, where suboptimal actions can have catastrophic consequences. The state importance weight is:
\begin{equation}
    \label{eq:state-importance-weight}
    w(s) = V(s) - \min_{a \in \mathcal{A}}\, \underline{Q}(s, a),
\end{equation}
where the inner minimization is solved via gradient descent. The final WocaR-RL loss is:
\begin{equation}
    \mathcal{L}(\theta) = \mathcal{L}_{\text{DRL-robust}}(\theta) + \beta\, \sum_{s}\, w(s) \max_{\tilde{s} \in \mathcal{B}_{\epsilon}(s)} \mathcal{D}_{\mathrm{KL}}\!\left(\pi(\cdot \mid s),\, \pi(\cdot \mid \tilde{s})\right)
\end{equation}  
Other representative approaches include Adversarial Counterfactual Error (ACoE) \cite{DBLP:conf/iclr/BelaireSV25}, which minimizes the difference between $\pi$'s returns under adversarial and normal settings. However, it requires training two additional networks: one to estimate a counterfactual error and another to learn a reward model over neighboring states (i.e., $R(\tilde{s},a), \forall\, \tilde{s} \in \mathcal{B}_{\epsilon}(s)$). We exclude ACoE~\cite{DBLP:conf/iclr/BelaireSV25} from our result, as it did not converge after 2 million training steps (see Section \ref{sec:experiments}). Other work explores related directions, such as analyzing the effect of adversarial training on policies~\cite{DBLP:conf/icml/Korkmaz24,DBLP:conf/aaai/Korkmaz23}, designing robust neural architectures~\cite{DBLP:conf/aaai/NieJF024}, and studying action-space robustness~\cite{DBLP:conf/aaai/LeeGTHS20,DBLP:conf/nips/LiuL21a}.

Synthesizing prior work suggests that an ideal robustness method should (1) require no additional samples, (2) avoid auxiliary networks, and (3) account for long-term robustness. Table~\ref{tab:compare-work} summarizes existing methods and shows that none satisfy all three simultaneously: SA-Reg and Radial do not account for worst-case returns, WocaR-RL estimates worst-case returns using an additional network, SA-ATLA and PA-ATLA train an additional agent. We address this gap with a regularization approach that requires no extra samples or networks while explicitly optimizing long-term robustness.

\section{Importance sampling for robustness}

In this section, we present \textit{Adversarial Importance Sampling (Advis)}. We describe how to design IS-based robust objectives, and address two key challenges associated with IS: high variance and distribution mismatch.
\subsection{Robust training objectives}
\label{sec:adv_loss}
The core idea of our approach is to use convex relaxation to derive a provable worst-case policy $\pi^{\mathrm{wc}}$, and to re-use trajectories collected under the policy $\pi$ to estimate the expected cumulative return of $\pi^{\mathrm{wc}}$ via importance sampling. Specifically, given the current policy and attack budget $\epsilon$, we compute the output bounds of $\pi$ to construct $\pi^{\mathrm{wc}}$ using the closed form in Eq.~\ref{radial-wc-policy}. We then estimate the performance of $\pi^{\mathrm{wc}}$ using the importance weights $\rho^{\text{advis}}_{t} (\tau) = \prod_{k=0}^{t} \frac{\pi^{\text{wc}}(a_k|s_k)}{\pi(a_k|s_k)}$.

Our approach is sample-efficient, as it reuses existing trajectories, and it does not require training additional networks to estimate worst-case performance. By estimating returns over trajectories, it naturally captures long-term robustness. We present three training objectives, each one encodes a different inductive bias for robustness.

\textbf{ Maximize worst-case performance:} A direct approach is to maximize the estimated performance of the worst-case policy, encouraging high returns even under adversarial perturbations:
\begin{equation}
    \label{eq:wc_loss}
    \mathcal{L}_{\mathrm{robust}}^{\text{advis-wc}} = - \hat{J}(\pi^{\text{wc}})
\end{equation}
\textbf{Minimize return divergence:} We minimize the Mean Squared Error (MSE) between clean and worst-case returns, encouraging similar performance in both settings: 
\begin{equation}
    \label{eq:mse_loss}
    \mathcal{L}_{\mathrm{robust}}^{\text{advis-mse}} = \left(J(\tau) - \hat{J}(\pi^{\text{wc}})\right)^2
\end{equation}
\textbf{Minimize performance degradation:} Unlike the MSE loss, we only penalize drops in returns, ignoring cases where the worst-case return exceeds the clean return, which can occur early in training due to exploration:
\begin{equation}
    \label{eq:relu_loss}
    \mathcal{L}_{\mathrm{robust}}^{\text{advis-relu}} = \max \left(0,J(\tau) - \hat{J}(\pi^{\text{wc}})\right)
\end{equation}
\subsection{Variance and distribution mismatch}
Importance sampling estimators suffer from high variance, especially under distribution mismatch between behavior and target policies \cite{DBLP:journals/jmlr/MetelliPMR20}. In our setting, this mismatch is amplified since the target policy is the worst-case policy $\pi^{\text{wc}}$, which can significantly deviate from $\pi$. As a result, importance weights $\rho^{\text{advis}}$ can collapse to zero or explode to large values, leading to unstable training.

We address this in two ways. First, we reduce distribution mismatch using $\epsilon$-scheduling: we gradually increase the attack budget during training, limiting the deviation between $\pi$ and $\pi^{\text{wc}}$ in the early stages. As the policy becomes more robust over the course of training, it naturally becomes less sensitive to adversarial perturbations, and the distribution mismatch shrinks accordingly. Second, we use IS estimators that have lower variance than ordinary importance sampling (OIS) (Eq \ref{eq:ois}), namely Per-Decision IS (PDIS), Weighted IS (WIS), and Weighted Per-Decision IS (WPDIS). PDIS is an unbiased estimator with lower variance than OIS \cite{DBLP:conf/icml/PrecupSS00}. WIS and WPDIS are biased but consistent \cite{owen2013monte,DBLP:conf/icml/PrecupSS00}, and exhibit significantly lower variance than OIS \cite{DBLP:journals/jmlr/MetelliPMR20}.

We further stabilize training by computing these estimators over sampled trajectory windows rather than full trajectories, mitigating the sensitivity of IS estimators to trajectory length (i.e., the Curse of Horizon \cite{DBLP:conf/nips/Voloshin00Y21,DBLP:conf/nips/LiuLTZ18}). Finally, to prevent numerical instability, we truncate importance weights for PDIS and apply the log-sum-exp trick for WIS. See Appendix \ref{sec:supp_advis} for more details.

\subsection{Variance and trajectory regularization}

Methods such as SA-Reg enforce robustness at the state level by penalizing policy discrepancies at individual states. Such step-wise regularization may fail to capture the compounding effect of perturbations along a trajectory. In RL, it is more natural to measure discrepancies at the trajectory level by minimizing the divergence between clean and worst-case trajectory distributions: $\mathcal{D}\big(p(.|\pi^{\mathrm{wc}}), \, p(.|\pi)\big)$. However, this divergence is intractable to compute in model-free settings, as the trajectory distribution depends on the unknown transition dynamics: $p(.| \pi) = \rho_0(s_0)\prod_{t=0}^{T-1} \pi(a_t | s_t)P(s_{t+1} | s_t, a_t)$.

Using Lemma~\ref{lem:var}, we show that selecting the Rényi divergence as the trajectory-level discrepancy measure, together with the importance weights $\rho^{\mathrm{advis}}$, yields an efficient estimator of the divergence between clean and worst-case trajectory distributions. Moreover, minimizing this divergence reduces an upper bound on the variance of the IS estimator (Eq.~\ref{eq:upper-bounding}), thereby improving both robustness and stability. Lemma~\ref{lem:var} follows from results in \cite{DBLP:conf/nips/MetelliPFR18,DBLP:journals/jmlr/MetelliPMR20} and the proof is provided in the Appendix \ref{sec:supp_advis}.

\begin{lemma}
    \label{lem:var}
    Let $\boldsymbol{\tau} = \{\tau_i\}_{i=1}^{N}$ be a set of $N > 0$ trajectories collected using $\pi$. Let $R :\mathcal{T} \to \mathbb{R}$ denote the discounted return of a trajectory. The variance of OIS estimator of $\pi^{\mathrm{wc}}$ satisfies:
    \begin{equation}
        \label{eq:upper-bounding}
        \mathbb{V} \mathrm{ar}\left[\hat{J}_{\mathrm{IS}}\right] \leq \frac{\|R\|_{\infty}^{2}}{N} d_{2}\left(p(.| \pi^{\mathrm{wc}}),p(. | \pi)\right)
    \end{equation}
    Moreover, the exponentiated 2-Rényi Divergence $d_{2}$ can be efficiently estimated using :
    \begin{equation}
        \label{eq:reney-divergence-approx}
        \widehat{d}_{2}\left(p(.| \pi^{\mathrm{wc}}),p(. | \pi)\right) = N \sum_{i=0}^{N} \left(\frac{\rho_{T-1}^{\mathrm{advis}}(\tau_i)} {\sum_{j=1}^{N}\rho_{T-1}^{\mathrm{advis}}(\tau_j)}\right)^2
    \end{equation} 
\end{lemma}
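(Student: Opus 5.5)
The plan is to follow the argument of Metelli et al.\ for bounding importance-weight variance by a Rényi divergence, specialized to trajectory distributions. Write $p = p(\cdot\mid\pi)$, $q = p(\cdot\mid\pi^{\mathrm{wc}})$, and let $w(\tau)=q(\tau)/p(\tau)$ be the trajectory likelihood ratio.

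\textbf{Step 1: the likelihood ratio is the per-step weight.} Expand both trajectory densities as $\rho_0(s_0)\prod_t \pi(a_t\mid s_t)P(s_{t+1}\mid s_t,a_t)$. The initial-state and transition factors are identical in $p$ and $q$ and cancel, leaving $w(\tau)=\rho^{\mathrm{advis}}_{T-1}(\tau)$. This is the step that makes everything model-free. It requires absolute continuity, $\pi^{\mathrm{wc}}(\cdot\mid s)\ll\pi(\cdot\mid s)$, which holds automatically for the Gaussian policies of Eq.~\ref{radial-wc-policy}: both have full support and the same covariance.

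\textbf{Step 2: the variance bound.} The $\tau_i$ are i.i.d.\ under $p$, so $\mathbb{V}\mathrm{ar}[\hat J_{\mathrm{IS}}]=\frac1N\mathbb{V}\mathrm{ar}_{p}[w(\tau)R(\tau)]$. Drop the squared mean to get
\[
\mathbb{V}\mathrm{ar}_{p}[wR] \le \mathbb{E}_{p}[w^2R^2] \le \|R\|_\infty^2\,\mathbb{E}_{p}[w^2].
\]
By definition of the exponentiated 2-Rényi divergence, $\mathbb{E}_{p}[w^2]=\int q^2/p\,d\tau = d_2(q\|p)$, which gives Eq.~\ref{eq:upper-bounding}. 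Finiteness of $d_2$ is needed for the bound to be meaningful. For Gaussians with equal covariance, the per-step $d_2$ equals $\exp(\|\mu^{\mathrm{wc}}-\mu\|^2_{\Sigma})$, which is finite. Over a finite horizon $T$ the trajectory-level quantity is then finite as well, since it is bounded by the product of per-step suprema, the means lying in the bounded interval $[\underline\mu,\overline\mu]$.

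\textbf{Step 3: the estimator.} Start from the Monte Carlo estimate $\frac1N\sum_i w(\tau_i)^2$ of $\mathbb{E}_p[w^2]$. Self-normalize it by the estimate $\frac1N\sum_j w(\tau_j)$ of $\mathbb{E}_p[w]=1$, squared:
\[
\frac{\frac1N\sum_i w_i^2}{\left(\frac1N\sum_j w_j\right)^2} = N\sum_i\left(\frac{w_i}{\sum_j w_j}\right)^2 .
\]
This is exactly Eq.~\ref{eq:reney-divergence-approx} with $w_i=\rho^{\mathrm{advis}}_{T-1}(\tau_i)$; the index should run over $i=1,\dots,N$. It is the inverse of Kish's effective sample size rescaled by $N$, as in Metelli et al. It is consistent by the strong law of large numbers and the continuous mapping theorem, though biased in finite samples. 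It is computable from collected trajectories using only policy evaluations, hence "efficient."

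The main obstacle is not the inequality, which is elementary, but stating precisely in what sense the second claim holds. I would present $\widehat d_2$ as a consistent plug-in (self-normalized) estimator rather than an unbiased one, citing Metelli et al.\ for the effective-sample-size interpretation. I would also flag that consistency relies on $d_2<\infty$, justified by the Gaussian and finite-horizon argument in Step 2.
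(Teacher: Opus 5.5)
Your proposal is correct and is essentially the paper's argument spelled out: the paper cites Lemma~4.1 of \cite{DBLP:conf/nips/MetelliPFR18} for the variance bound and the relation $\widehat d_2 = N/\widehat{\mathrm{ESS}}$ from \cite{DBLP:journals/jmlr/MetelliPMR20} for the estimator, which are exactly your Steps~2 and~3 (your Step~1 makes explicit the cancellation of $\rho_0$ and $P$ that the paper uses implicitly). Your description of $\widehat d_2$ as a consistent but biased self-normalized estimator, summed over $i=1,\dots,N$, is more accurate than the paper's proof, which writes it as equal to $d_2$; like the paper, though, you treat $\pi^{\mathrm{wc}}$ as a normalized Markov policy, whereas the mean in Eq.~\ref{radial-wc-policy} depends on $a_t$ and on the sign of $A_t$, so $\mathbb{E}_{p}[w]=1$ and your closed-form Gaussian $d_2$ are idealizations (the variance inequality itself is unaffected).
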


\section{Revisiting robust DRL implementations and evaluations}
\subsection{The codebase landscape of robust DRL}
Despite recent progress in robust DRL, existing implementations remain difficult to use for prototyping and extensive evaluations. First, current codebases are often recursive and monolithic: newer methods build on top of earlier ones (e.g., WocaR-RL builds on SA-ATLA, itself built on SA-Reg), and a single file ( \texttt{agent.py}) implements multiple algorithms together with the adversarial evaluation. This makes it difficult to trace the behavior of individual algorithms and complicates prototyping. Moreover, the recursive design can propagate errors across codebases. For example, an incorrect implementation of SGLD in SA-Reg is inherited by all the other codebases. Second, inconsistencies exist between published pseudocode and implementations: some components are not implemented (e.g., state importance weights, Eq. \ref{eq:state-importance-weight} in WocaR-RL), or different training objectives are used (e.g., Radial, FGSM loss in PA-ATLA for continuous actions). Third, existing implementations rely on legacy MuJoCo versions that are no longer maintained (-v2). Finally, how checkpoints are saved limits portability. They can only be loaded with the current codebases and are tied to legacy MuJoCo.

To address these challenges, we introduce \texttt{advrl}, an open-source PyTorch library that provides re-implementations of existing robustness algorithms and adversarial attacks. The library offers: (1) clean and modular single-file implementations, (2) reproducibility via standardized configurations, controlled seeds, and portable checkpoints, (3) parallel evaluations across CPU cores, (4) traceability through detailed logging during evaluations, and (5) easy integration of new methods. We also revisit several implementation details from prior work. Additional details are in the Appendix \ref{sec:suppl_advrl} and \ref{sec:suppl_impl_det}.

\subsection{How to evaluate against learned adversaries}

\begin{figure}[t]
    \centering
    \includegraphics[width=0.95\textwidth]{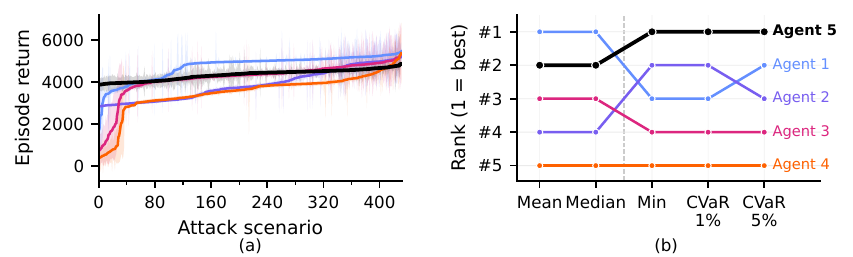}
    \caption{(a) Return distributions of five agents across 432 PA-AD attacks, sorted in ascending order. (b) Ranking of agents by different metrics. }
    \label{fig:eval_metrics}
\end{figure}
\label{sec:how_to_evaluate}
We evaluate the robustness of agents by testing them against a suite of adversarial attacks, including: (1) Random attacks (2) Critic attacks \cite{DBLP:conf/nips/0001CX0LBH20} (3) Maximal Action Difference \cite{DBLP:conf/nips/0001CX0LBH20} (4) Robust SARSA (RS) \cite{DBLP:conf/nips/0001CX0LBH20} (5) SA-RL \cite{DBLP:conf/iclr/ZhangCBH21} and (6) PA-AD \cite{DBLP:conf/iclr/SunZLH22}. The last three are \textit{learned} adversaries that require training either a $Q$-network (RS) or a policy (SA-RL, PA-AD), which makes the evaluation sensitive to the choice of hyperparameters.

\begin{wrapfigure}{r}{0.4\textwidth}
    \centering
    \includegraphics[width=0.38\textwidth]{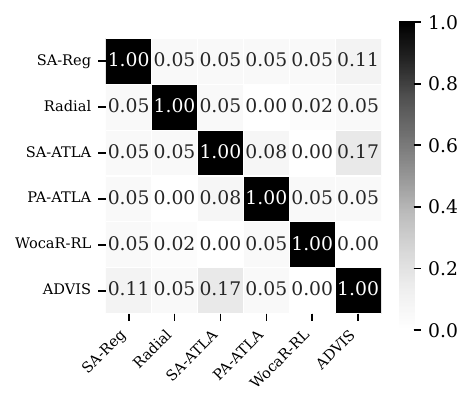}
    \caption{Jaccard similarity of top-5\% strongest PA-AD configurations across six defenses.}
    \label{fig:vulnerabilities}
  \end{wrapfigure}
We identify two key requirements for fair and meaningful evaluation against learned adversaries. First, evaluation must consider a large and diverse set of adversaries (i.e., hyperparameter configurations), as configurations effective against one defense often fail against others. To demonstrate this, we train six defenses and evaluate them against 432 PA-AD attackers obtained via grid search (1 hyperparameter configuration = 1 attacker). For each defense, we select the top 5\% strongest configurations (lowest return) and measure their overlap using Jaccard similarity. Results are shown in Figure~\ref{fig:vulnerabilities}. The low similarity shows that optimal adversarial configurations do not transfer across defenses. Prior work evaluates against relatively few adversaries (e.g., 30, 50, or 216) and often does not report the tuned hyperparameters or the search space. In contrast, we evaluate against a large set of adversaries: 1296 PA-AD attackers (432 configurations $\times$ 3 seeds), 600 SA-RL attackers, and 432 RS attackers. We report the search space in Appendix \ref{sec:detail_attacks}.

The second key aspect is the evaluation metric. Mean or median returns across hyperparameter configurations can be misleading: an agent that performs well on average may still suffer catastrophic failures. This is illustrated in Figure~\ref{fig:eval_metrics} (left), which shows the returns of five agents across 432 PA-AD attacks, sorted in ascending order. Agent 5 (black) is the most robust because of the absence of catastrophic returns (near x=0). Yet, as shown in Figure~\ref{fig:eval_metrics} (right), mean and median fail to rank it first, and instead select an agent with catastrophic failures (Agent 1). This motivates the use of tail-sensitive metrics such as the minimal return and Conditional Value-at-Risk $\text{CVaR}_\alpha$ \cite{DBLP:conf/nips/ChowG14,DBLP:conf/nips/ChowTMP15} (the average return over the worst $\alpha$\% of attackers). $\text{CVaR}_\alpha$ allows us to assess robustness at different risk levels: in safety-critical settings, where rare failures are unacceptable, we should use small $\alpha$, whereas in less adversarial environments, larger $\alpha$ are more appropriate.

\section{Experiments}
\label{sec:experiments}
\begin{table*}[thb]
    \centering
    \small
    \caption{Minimal episodic reward  $\pm$  standard deviation using 1000 episodes. The gray rows indicate the most robust algorithm.}
    \resizebox{\linewidth}{!}{
    \begin{tabular}{lcccccccccc}
    \toprule
    \multirow{2}{*}{Env.} & \multirow{2}{*}{$\epsilon$} & \multirow{2}{*}{Method} & \multirow{2}{*}{\shortstack{Natural\\ Reward}} & \multicolumn{6}{c}{Attack Reward} & \multirow{2}{*}{\shortstack{ Best\\ Attack}} \\
     & & & & Random & Critic & MAD & RS & SA-RL & PA-AD & \\ 
     \midrule
    \multirow{6}{*}{HalfCheetah-v5} & \multirow{6}{*}{$0.15$}
    & SA-Reg \cite{DBLP:conf/nips/0001CX0LBH20}    & 4903.3 $\pm$ 36 & 4638.8 $\pm$ 43 & 4824.4 $\pm$ 37 & 4317.2 $\pm$ 1334 & 3795.7 $\pm$ 92 & 3940.0 $\pm$ 601 & 3796.9 $\pm$ 41 & 3795.7 \\
    & & Radial \cite{DBLP:conf/nips/OikarinenZMDW21}   & 4053.8 $\pm$ 522 & 4036.7 $\pm$ 470 & 4047 $\pm$ 476 & 4038.3 $\pm$ 482 & 4028.1 $\pm$ 533 & 4037.2 $\pm$ 472 & 4033.1 $\pm$ 536 & 4028.1 \\
    & & SA-ATLA \cite{DBLP:conf/iclr/ZhangCBH21}   & 5400.8 $\pm$ 794 & 4742.5 $\pm$ 1336 & 4914.2 $\pm$ 739 & 2588.2 $\pm$ 1612 & 973.7 $\pm$ 131.6 & 342.6 $\pm$ 470 & 446.8 $\pm$ 607 & 342.6 \\
    & & PA-ATLA \cite{DBLP:conf/iclr/SunZLH22}  & 4293.5 $\pm$ 735 & 3665.1 $\pm$ 1079 & 4305.8$\pm$ 281 & 3582.2$\pm$ 1178 & 2576.7  $\pm$ 1423 & 1933.8 $\pm$ 1098 & 2115.4 $\pm$ 1130 & 1933.8 \\
    & & WocaR-RL \cite{DBLP:conf/nips/LiangSZH22} & 5666.4 $\pm$ 412 & 3369.7 $\pm$ 1650 & 5202.2$\pm$ 135 & 793.8$\pm$ 555 & 401.4 $\pm$ 252& -88.8 $\pm$162 & 50.85  $\pm$903 & -88.8 \\
    \rowcolor{lightgray!40} \cellcolor{white}&\cellcolor{white} &   Advis& 4781.1 $\pm$ 183 & 4747.6 $\pm$ 193 & 4686.5 $\pm$ 469 & 4579.8 $\pm$ 676 & 4118.8 $\pm$ 62 & 4282.4 $\pm$ 539 & 4111.2 $\pm$ 729 & \textbf{4111.2} \\
    \midrule
    \multirow{6}{*}{Hopper-v5} & \multirow{6}{*}{$0.075$}
    &  SA-Reg \cite{DBLP:conf/nips/0001CX0LBH20} & 3555.2 $\pm$ 5 & 3424.2 $\pm$ 13  & 3545.6 $\pm$ 16 & 3531.3 $\pm$ 410 &  1363.32 $\pm$ 265 & 1128.5 $\pm$ 917 & 1140.5 $\pm$ 656 & 1128.5  \\
    & & Radial \cite{DBLP:conf/nips/OikarinenZMDW21} &  3487.2 $\pm$ 76 & 2545.3 $\pm$ 849 &  3493.6 $\pm$ 13 & 1730.4 $\pm$ 801 & 858.6 $\pm$ 52 & 888.07 $\pm$ 118 & 937 $\pm$ 162 & 858.6 \\
    & & SA-ATLA \cite{DBLP:conf/iclr/ZhangCBH21}  & 3261.34 $\pm$2 & 3235.8$\pm$  9 & 3250.3 $\pm$  3  & 3229.5 $\pm$  373  & 868.8 $\pm$  52 & 1451.9 $\pm$  112 &  896.3 $\pm$  33 & 868.8   \\
    & & PA-ATLA \cite{DBLP:conf/iclr/SunZLH22}  & 3105.9 $\pm$  3  & 3094.6 $\pm$  3  &  3089.4 $\pm$  20 & 3056.1 $\pm$  216  & 614.5 $\pm$  71  & 772.3 $\pm$  100  & 856.2  $\pm$  593 & 614.5 \\
    & & WocaR-RL \cite{DBLP:conf/nips/LiangSZH22}  & 3317.3 $\pm$  1 & 3292.2 $\pm$ 2 & 3322.3 $\pm$ 9 & 3170.2 $\pm$ 698  & 595.9 $\pm$ 499 & 761.4$\pm$  60  & 727.3 $\pm$ 150 & 595.9  \\
    \rowcolor{lightgray!40} \cellcolor{white}&\cellcolor{white} &   Advis  & 2599.9 $\pm$ 413 & 2104.2 $\pm$ 309  & 2772.2 $\pm$ 225  & 2306 $\pm$ 534  & 1613.1 $\pm$ 307  & 1569.2  $\pm$  267  &  1641.5  $\pm$ 235 & \textbf{1569.2} \\
    
    \midrule
    \multirow{6}{*}{Walker2d-v5} & \multirow{6}{*}{$0.05$}
    & SA-Reg \cite{DBLP:conf/nips/0001CX0LBH20} & 6129 $\pm$ 883 & 5183 $\pm$ 1433  & 6110  $\pm$ 1159  & 5959.3 $\pm$ 1202  &  802.6 $\pm$ 1691 & 400.9 $\pm$ 2097 & 591 $\pm$ 2123 & 400.9  \\
    & & Radial \cite{DBLP:conf/nips/OikarinenZMDW21} &  3114 $\pm$ 188 & 2508.4 $\pm$ 829 &  2750 $\pm$ 577 & 3048.7 $\pm$ 487 & 543.3 $\pm$ 658 &792.3 $\pm$ 408 & 824.4 $\pm$ 425 & 543.3 \\
    & & SA-ATLA \cite{DBLP:conf/iclr/ZhangCBH21}  & 6132.4 $\pm$ 344 & 5803.6 $\pm$  1480 & 6091.2 $\pm$  323  & 5994 $\pm$  677  & 222.1 $\pm$  1862 & 379 $\pm$  2010 &  418 $\pm$  2289 & 222.1 \\
    & & PA-ATLA \cite{DBLP:conf/iclr/SunZLH22}  & 5781.4$\pm$  127 & 5746 $\pm$  1067 &  5637.9 $\pm$ 165 & 5750.2 $\pm$  527  & 1623.9 $\pm$  1286  & 4701.1 $\pm$  1779  & 3736.6  $\pm$  1368 & 1623.9 \\
    & & WocaR-RL \cite{DBLP:conf/nips/LiangSZH22}  & 6402.7 $\pm$  294 & 6176 $\pm$ 720 & 6320.6 $\pm$ 431 & 6320.2 $\pm$ 1110  & 184.9 $\pm$ 1907 & 257 $\pm$  1603  & 284.3 $\pm$ 2294 & 184.9  \\
    \rowcolor{lightgray!40} \cellcolor{white}&\cellcolor{white} &   Advis & 4626.8 $\pm$ 907 & 4629 $\pm$ 952  & 4609.2 $\pm$ 904  & 4593.9 $\pm$ 858.1  & 3189.6 $\pm$ 1222.9  & 2155.3 $\pm$ 1463  &  3619.6  $\pm$ 1081 & \textbf{2155.3} \\
    
    \bottomrule
    \end{tabular}
    }
    \label{tab:results}
\end{table*}

\begin{figure}[htb]
    \centering
    \includegraphics[width=0.85\textwidth]{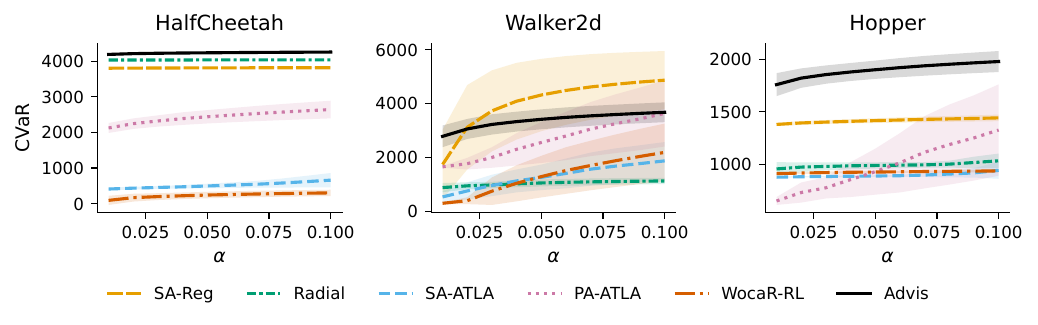}
    \caption{$\text{CVaR}_{\alpha}$ curves of the baselines and Advis in each environment. The shaded area denotes $\pm$ standard deviation. Advis dominates in most environment, especially in low $\alpha$ values. }
    \label{fig:cvar_curves}
\end{figure}

\begin{figure}[htb]
    \centering
    \begin{subfigure}[t]{0.8\textwidth}
        \centering
        \includegraphics[width=\textwidth]{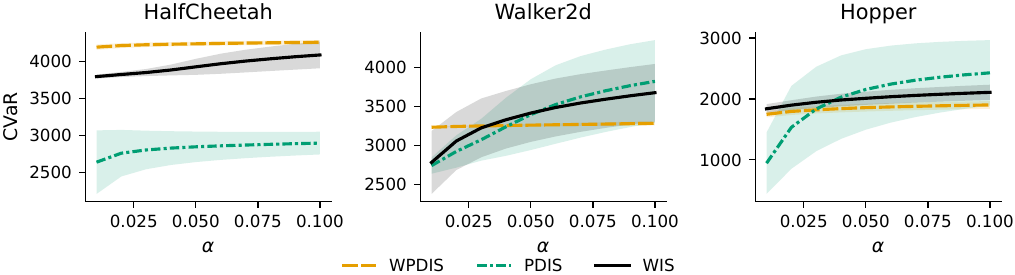}
        \caption{Ablation: importance sampling technique.}
        \label{fig:is_ablation}
    \end{subfigure}
    \vspace{0.5em}
    \begin{subfigure}[t]{0.8\textwidth}
        \centering
        \includegraphics[width=\textwidth]{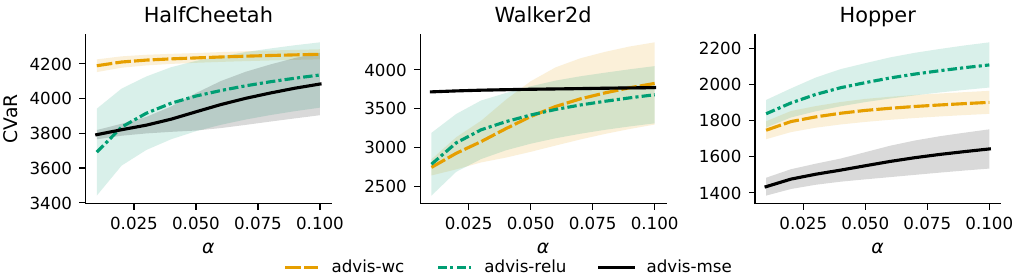}
        \caption{Ablation: adversarial loss.}
        \label{fig:loss_ablation}
    \end{subfigure}

    \caption{Ablation study results. (a) we ablate the importance sampling estimator. (b) we ablate the adversarial loss. wc is the loss in Eq. \ref{eq:wc_loss}, MSE for Eq. \ref{eq:mse_loss}, and ReLU for Eq. \ref{eq:relu_loss} }
    \label{fig:ablation_combined}
\end{figure}

\begin{figure}[htb]
    \centering
    \includegraphics[width=0.9\textwidth]{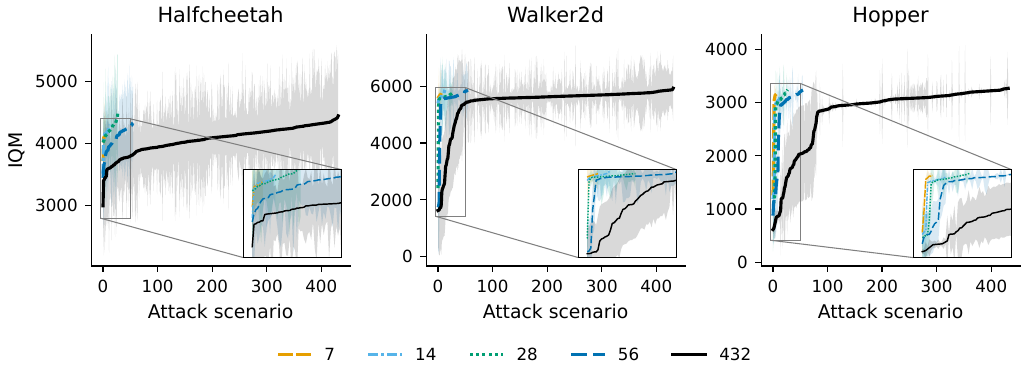}
    \caption{Grid search vs random search. We plot sorted attack performance for both methods.
    None of the random search trials found stronger attacks than grid search.}
    \label{fig:grid_ablation}
\end{figure}
\paragraph{Experimental setup:} We perform extensive experiments on three MuJoCo environments \cite{DBLP:journals/corr/abs-2407-17032}. For a fair comparison, all methods are trained for the same number of steps (2 million steps) using PPO. We use an identical network architecture: MLP of 2 layers with 64 neurons each. To isolate each method’s contribution, we evaluate them independently (e.g., no SA-Reg loss is added to other methods as in previous work). For each algorithm, we use a grid search to identify optimal hyperparameters. To select the best hyperparameters, we first discard agents with poor clean performance. We then evaluate the remaining agents against 20 randomly sampled learned adversaries and eliminate non-robust ones. Finally, we evaluate the retained agents against the full set of learned attacks and select the configuration with the best robust performance. All agents are trained with three random seeds. Each attack is evaluated over 1000 episodes (20 seeds, 50 episodes per seed). We report the interquartile mean (IQM) \cite{DBLP:conf/nips/AgarwalSCCB21} and standard deviation of episodic returns. Experiments are conducted on Intel Xeon Platinum CPUs.

\textbf{ Main results: } Table~\ref{tab:results} reports the minimal performance achieved under each attack. The last column (“Best Attack”) shows the worst-case return across all attacks. We see that our method consistently achieves the strongest robust performance across all environments. In Figure \ref{fig:cvar_curves}, we report $\text{CVaR}_\alpha$ curves for  $\alpha \in [0.01,0.1]$. In HalfCheetah and Hopper, Advis dominates all baselines across the entire range of $\alpha$.  Walker2d shows a more nuanced behavior: for small $\alpha \leq 0.02$ (extreme tail events), Advis outperforms SA-Reg, whereas for $\alpha > 0.02$, the opposite is true. This illustrates that the ranking of algorithms can change depending on risk preferences.

\textbf{Ablations: } We ablate two critical components of our method: the importance sampling technique and the adversarial loss. Figure \ref{fig:is_ablation} compares the $\text{CVaR}_\alpha$ curves for three IS estimators: PDIS, WIS, and WPDIS. In general, the weighted variants (WIS and WPDIS) outperform PDIS, particularly at small $\alpha$ values. Moreover, WPDIS exhibits significantly lower variance than the other techniques. In Figure~\ref{fig:loss_ablation}, we evaluate the impact of different adversarial losses introduced in Section~\ref{sec:adv_loss}. No single loss consistently outperforms the others across all environments. The best loss depends on the environment, highlighting the importance of tuning the adversarial loss.

\textbf{Grid search vs random search}: From a red-teaming perspective, we are interested in running a full grid search over the hyperparameters of learned adversaries to ensure a thorough, fair, and reproducible evaluation. Unlike random search, the determinism of grid search avoids introducing additional randomness into the evaluation of RL agents. However, random search can be more efficient, as it may find competitive attacks with far fewer configurations \cite{DBLP:journals/jmlr/BergstraB12}. To assess this, we compare the two methods by training RS attacks against PA-ATLA agents (we select PA-ATLA to test whether random search can find stronger attacks than those reported in Table \ref{tab:results}). We use Random search with budgets of 7, 14, 28, and 56 configurations, sampled from the same hyperparameter domain as the grid search.  In Figure~\ref{fig:grid_ablation}, we report the sorted attack performance for both methods. None of the random search trials find stronger attacks than grid search. As the number of sampled configurations increases, random search approaches the strongest attacks found by grid search.

\textbf{Limitations and future work:} We focused on policy optimization methods in continuous control environments. We made this choice to be able to run extensive experiments. In future work, we will extend our approach to other RL paradigms. For \texttt{advrl}, We will continue integrating newly proposed robustness methods as they appear. Moreover, we will add support for automated hyperparameter tuning for learned adversaries.
\section{Conclusion}
In this paper, we proposed Advis, a method for training robust DRL agents against input perturbations. Advis combines importance sampling and convex relaxation to estimate and optimize the worst-case performance of DRL policies. In contrast to prior work, it does not require collecting additional environment trajectories or training auxiliary neural networks, while explicitly accounting for long-term robustness. We also introduced \texttt{advrl}, a modular library that provides clean implementations of existing methods and facilitates experimentation. Moreover, we conducted evaluations over a large population of adversarial configurations for a reliable robustness assessment. Empirically, Advis achieves strong and consistent robustness across multiple environments, particularly in worst-case scenarios.

% \newpage

{\small
\bibliographystyle{unsrt}
\bibliography{references}
}

%%%%%%%%%%%%%%%%%%%%%%%%%%%%%%%%%%%%%%%%%%%%%%%%%%%%%%%%%%%%

\appendix
\newpage
\section{Introduction to \texttt{advrl} library }
\label{sec:suppl_advrl}

\texttt{advrl} is a library built using Pytorch that provides clean implementations of adversarial Deep Reinforcement Learning robustness and attack algorithms. It focuses on perturbations against inputs of the policy networks, a widely studied problem. Our goal is to provide a tool that facilitates research and investigation of adversarial DRL. To the best of our knowledge, \texttt{advrl} is the first project to give researcher a single library that has ready implementations of the main methods in one place. Moreover, our library provides modular and clean implementation, which facilitates prototyping. In addition, our library ensures reproducibility and traceability of results for responsible research. In the following, we will provide information on how to use our library details regarding the implemented baselines.  \texttt{advrl} will be maintained by the authors, and it will keep implementing new SOTA algorithm as they appear. 

\texttt{advrl} provides the implementations of the following algorithms:
\begin{itemize}
    \item Vanilla training: Proximal Policy Optimization (PPO) \cite{DBLP:journals/corr/SchulmanWDRK17}
    \item Defensive training: SA-Reg \cite{DBLP:conf/icml/ShenLJWZ20,DBLP:conf/nips/0001CX0LBH20}, Radial \cite{DBLP:conf/nips/OikarinenZMDW21}, SA-ATLA~\cite{DBLP:conf/iclr/ZhangCBH21}, PA-ATLA~\cite{DBLP:conf/iclr/SunZLH22}, WocaR-RL \cite{DBLP:conf/nips/LiangSZH22}, and ADVIS (ours).
    \item Adversarial attacks: Random perturbations (Uniform, Gaussian and fixed), Maximal Action Difference (MAD) attack \cite{DBLP:conf/nips/0001CX0LBH20}, Robust Sarsa (RS) attack \cite{DBLP:conf/nips/0001CX0LBH20}, Critic attack \cite{DBLP:conf/atal/PattanaikTLBC18,DBLP:conf/nips/0001CX0LBH20}, State-Adversarial attacker learned using RL (SA-RL)~\cite{DBLP:conf/iclr/ZhangCBH21}, Policy Adversarial Actor Director (PA-AD)~\cite{DBLP:conf/iclr/SunZLH22}.
\end{itemize}

\subsection{How to use \texttt{advrl}}

Each algorithm in \texttt{advrl} is associated with a dedicated configuration file created using \texttt{ml\_collections}. Each configuration file fully specifies the training setup, including the neural network architecture, environment, adversarial training parameters, optimization hyperparameters, and logging options. See Figure \ref{fig:config-file}, for an example of the config file of SA-Reg algorithm. All the hyperparameters can be overridden directly via command-line flags without modifying the config file. Figure~\ref{fig:training_command} shows an example of a command to training SA-Reg.

\begin{figure}[!h]
    \centering
    \includegraphics[width=0.85\textwidth]{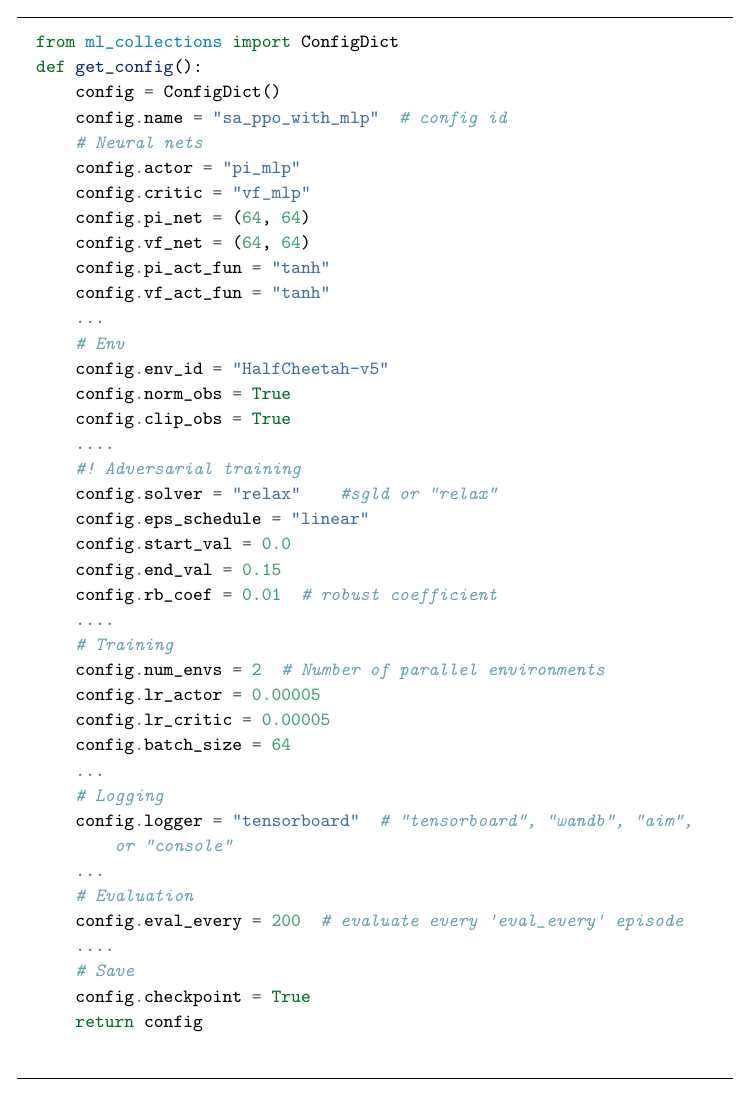}
    \caption{Configuration file of SA-Reg.}
    \label{fig:config-file}
    \end{figure}

\begin{figure}[!h]
    \centering
    \includegraphics[width=0.85\textwidth]{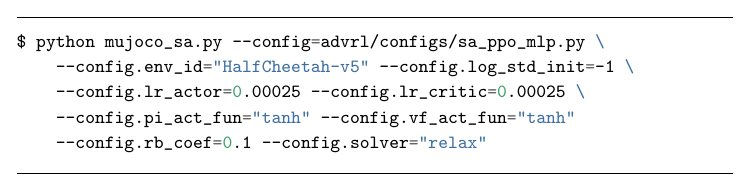}
    \caption{Command to train SA-Reg.}
    \label{fig:training_command}
    \end{figure}

\subsubsection{Adversarial attacks}

Adversarial attacks in \texttt{advrl} follow the same configuration style as defense algorithms. Each attack is specified through a dedicated configuration file, which defines the attack parameters, evaluation settings, and logging options. Learned attacks such as RS, SA-RL and PA-AD rely on training a Q-network or an adversarial policy. For these methods, the configuration file additionally includes training hyperparameters. Figure~\ref{fig:attack_config_file} shows examples of config file of MAD and RS attacks.

\begin{figure}[!h]
\centering
\begin{subfigure}[b]{0.48\textwidth}
    \centering
    \includegraphics[width=\textwidth]{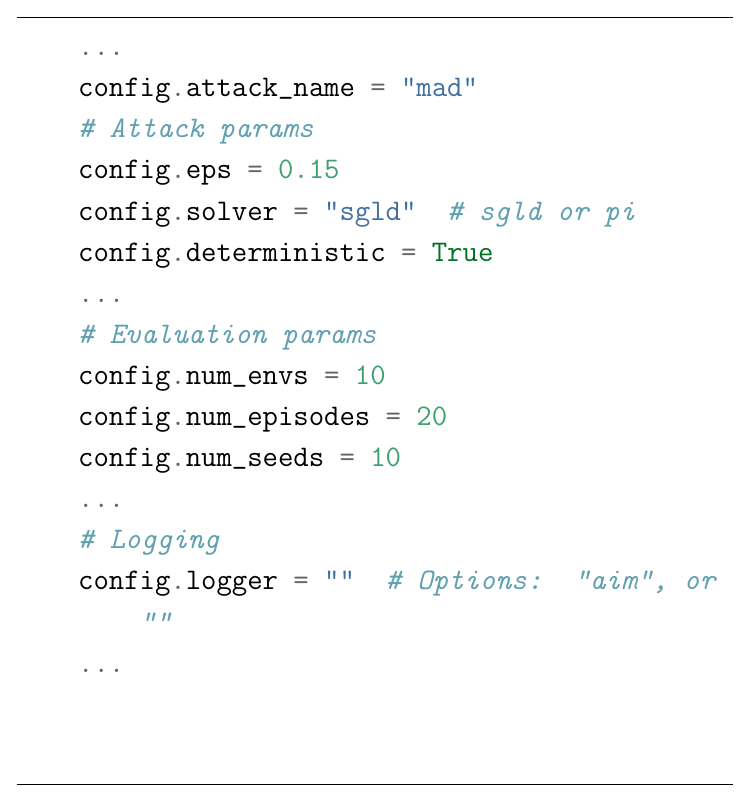}
    \caption{MAD config.}
\end{subfigure}
\hfill
\begin{subfigure}[b]{0.48\textwidth}
    \centering
    \includegraphics[width=\textwidth]{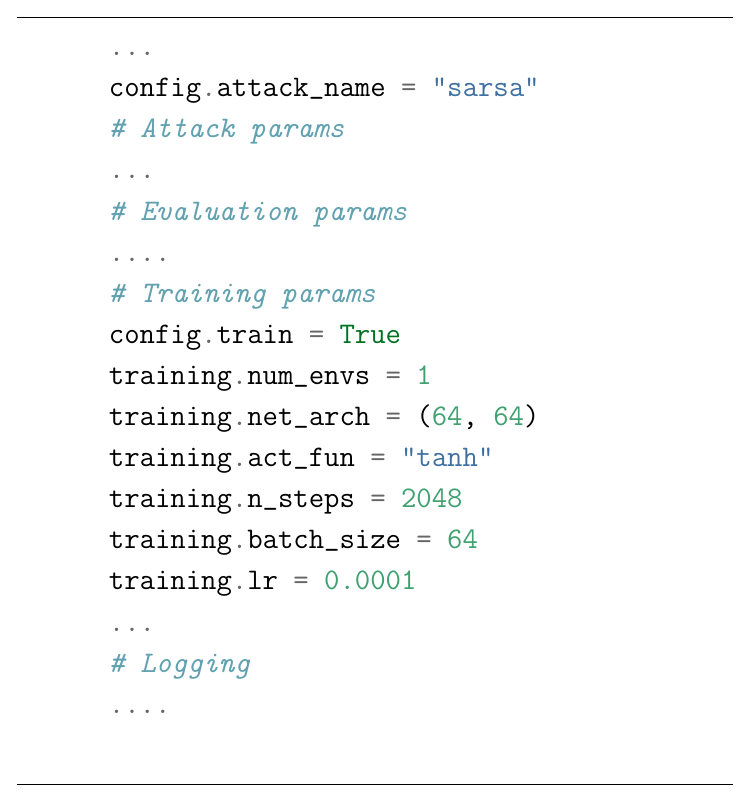}
    \caption{RS config.}
\end{subfigure}
\caption{Configuration files of (a) MAD and (b) RS attacks.}
\label{fig:attack_config_file}
\end{figure}

Attacks can be executed via the command line by specifying a configuration file and overriding hyperparameters if necessary. Figure~\ref{fig:run_mad} and \ref{fig:run_rs} shows example commands for running MAD and RS attacks respectively.

\begin{figure}[!h]
\centering
\includegraphics[width=0.85\textwidth]{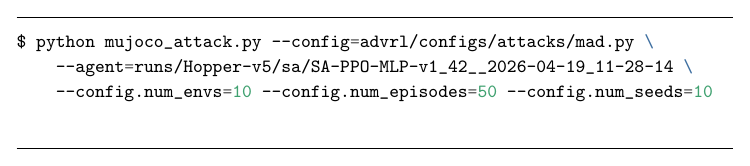}
\caption{Command to run MAD attack.}
\label{fig:run_mad}
\end{figure}

\begin{figure}[!h]
\centering
\includegraphics[width=0.85\textwidth]{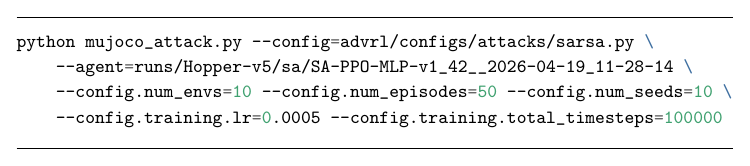}
\caption{Command to run RS attack.}
\label{fig:run_rs}
\end{figure}

\subsubsection{Checkpointing and Reproducibility}

For adversarial training, \texttt{advrl} saves a checkpoint under a directory path of the form \texttt{runs/env\_id/algorithm/run\_name}. Each checkpoint contains neural network weights, list of environment wrappers, the full environment normalization statistics,
and the complete run configuration and launched command. When running adversarial attacks, \texttt{advrl} saves the evaluation results under a directory path of the form \texttt{evals/env\_id/algorithm/run\_name/run\_id}. All the random seeds used across evaluation runs are stored in \texttt{seeds.json}. The complete per-seed, per-episode rewards are saved in \texttt{metadata.json}, preserving the full evaluation trajectory for traceability. The \texttt{metrics.json} file contains aggregate statistics computed across all seeds and episodes, including the mean, median, interquartile mean (IQM), interquartile range (IQR), standard deviation, minimum, and maximum returns. Full details are in table \ref{tab:checkpoint}.

\begin{table}[!h]
    \label{tab:checkpoint-detail}
    \centering
    \caption{Files saved by \texttt{advrl} for defensive training and adversarial attack evaluation.}
    \label{tab:checkpoint}
    \begin{tabular}{llp{7cm}}
    \toprule
    \textbf{Context} & \textbf{File} & \textbf{Description} \\
    \midrule
    \multirow{5}{*}{Training}
     &\texttt{policy.pt / \texttt{critic.pt}} & Neural Network weights \\
     & \texttt{envparams.npz}  & Environment normalization statistics (running mean, variance, count). \\
     & \texttt{wrappers.json}  & List of active environment wrappers. \\
     & \texttt{config.json}    & Full training configuration. \\
     & \texttt{command.txt}    & Exact command used to launch the experiment. \\
    \midrule
    \multirow{5}{*}{Attack}
     & \texttt{agent.json}     & Configuration of the victim policy under attack. \\
     & \texttt{config.json}    & Configuration of the attack. \\
     & \texttt{seeds.json}     & List of random seeds used across evaluation runs. \\
     & \texttt{metadata.json}  & Per-seed, per-episode rewards for the full evaluation. \\
     & \texttt{metrics.json}   & Aggregate statistics: mean, median, IQM, IQR, std, min, max. \\
    \bottomrule
    \end{tabular}
\end{table}

\section{Implementation details }
\label{sec:suppl_impl_det}
In this section we will present the implementation details of each algorithm. We will provide how we implemented each algorithm, what are the design choices we made, and how our implementation is different than the one in the previous codebases. In the following we will assume that the policy is a multivariable Gaussian distribution. The policy network output the vector mean $\mu_{\theta}(s)$ and a diagonal covariance matrix $\Sigma_{\theta}$ independent of the input $s$.

\begin{algorithm}[!h]
    \caption{Proximal Policy Optimization (PPO)}
    \KwIn{Total steps $N$, rollout length $T$, epochs $K$, mini-batch size $M$, policy network $\pi_\theta$, value network $V_\phi$.}
    \BlankLine
    $n \gets 0$, \;
    \While{$n < N$}{
    
        \DataCollection{
            Collect rollout $\mathcal{D} = \{(s_t, a_t, r_t, s_{t+1})\}_{t=1}^{T}$ using $\pi_{\theta}$, and update $n \gets n + T$ 
            
            Compute rewards-to-go $\hat{R}_t$ and advantages $\hat{A}_t$
            
        }
        \For{$\text{epoch} = 1, \dots, K$}{
            Sample mini-batch $\mathcal{B} \subset \mathcal{D}$ of size $M$\\
        \PolicyUpdate{
                Update policy network using:
                \BlankLine
                $\mathcal{L}_{\text{actor}}(\theta) = \sum_{\mathcal{B}} \min\!\left(r_t(\theta)\,\hat{A}_t,\operatorname{clip}(r_t(\theta),\, 1 \pm \epsilon)\,\hat{A}_t\right)$
            }
        \CriticUpdate{
                Update value network using: 
                $\mathcal{L}_{\text{critic}}(\phi)=\sum_{\mathcal{B}}(V_\phi(s_t)-\hat{R}_t)^2$
            }
        }
    }
    \label{algo:ppo}
    \end{algorithm}

\subsection{SA-Reg}
SA-Reg was independently proposed in two works \cite{DBLP:conf/icml/ShenLJWZ20} and \cite{DBLP:conf/nips/0001CX0LBH20}.  The goal of \cite{DBLP:conf/icml/ShenLJWZ20} is to learn smooth policies with respect to states, while \cite{DBLP:conf/nips/0001CX0LBH20} provides a theoretical justification by showing that the performance degradation under an optimal adversary $v^*$ is bounded by this discrepancy: 
\begin{equation}
\label{eq:sa-reg-ineq}
    \max_{s \in \mathcal{S}} \left[ V_{\pi}(s) - V_{\pi \circ v^*}(s) \right]
    \;\leq\;
    \alpha \, \max_{s \in \mathcal{S}} \; \max_{\tilde{s} \in \mathcal{B}_\epsilon(s)}
    D_{\mathrm{TV}}\!\left( \pi(.| s) \,\|\, \pi(.| \tilde{s}) \right),
\end{equation}
where $D_{\mathrm{TV}}$ denotes the total variation distance and $\alpha$ is a constant independent of the policy $\pi$. Thus, minimizing Eq~\ref{eq:sa-reg-loss} reduces the performance gap between the clean and adversarial settings.

In practice, the Kullback–Leibler (KL) divergence is used and the loss is computed over a batch of sampled states:
\begin{equation}
    \label{eq:sa-reg-batch-loss}
    \mathcal{L}_{\mathrm{robust}}^{\mathrm{SA-Reg}} = \sum_{s} \max_{\tilde{s_t} \in \mathcal{B}_{\epsilon}(s_t)} \mathcal{D}_{\mathrm{KL}}(\pi(.|s), \pi(.|\tilde{s}))
\end{equation}

To compute the inner maximization $\max_{\tilde{s} \in \mathcal{B}_\epsilon(s)}$, we can either us convex relaxation tools \cite{DBLP:conf/nips/XuS0WCHKLH20} or Stochastic Gradient Langevin Dynamics (SGLD) \cite{DBLP:conf/nips/0001CX0LBH20}.
The full pseudocode of SA-Reg as we implemented it is in Algorithm \ref{algo:sa-reg}. Compared to vanilla PPO in Algorithm \ref{algo:ppo},  SA-Reg modifies the policy update step by adding a regularization term. The authors in \cite{DBLP:conf/nips/0001CX0LBH20} proposed two methods to compute $\max_{\tilde{s} \in \mathcal{B}_{\epsilon}(s)} \mathcal{D}_{\mathrm{KL}}(\pi(.|s), \pi(.|\tilde{s}))$: convex-relaxation tools and SGLD. In addition to these two approaches, we added power iteration (PI) method \cite{DBLP:journals/pami/MiyatoMKI19}. In the following we will explain each method.

\subsubsection*{Convex relaxation tools:}
Given the input state $s$ and the current value of attack budget $\epsilon_t$. we use \texttt{auto\_LIRPA} \cite{DBLP:conf/nips/XuS0WCHKLH20} library to get the lower and upper bounds of the mean vector $\underline{\mu}_{\theta}(s)$ and $\overline{\mu}_{\theta}(s)$. 
\begin{equation}
    \max_{\tilde{s} \in \mathcal{B}_{\epsilon}(s)} \mathcal{D}_{\mathrm{KL}}(\pi(.|s), \pi(.|\tilde{s})) \approx \sum_{i=1}^{\mathcal{A}} \frac{\max[\underline{\mu}_{\theta}(s) - \mu_{\theta}(s),\overline{\mu}_{\theta}(s) - \mu_{\theta}(s)]_{i}}{\Sigma_{ii}}
\end{equation}

\subsubsection*{Stochastic Gradient Langevin Dynamics}

SGLD solves the problem in \ref{eq:sa-reg-max-problem} using an iterative approach. SGLD \cite{DBLP:conf/colt/Lamperski21} is similar to gradient descent but it adds a noise factor to the update to escape the zero gradient at $\tilde{s}_0 = s$. 
\begin{equation}
    \label{eq:sa-reg-max-problem}
    \max_{\tilde{s} \in \mathcal{B}_{\epsilon}(s)} \mathcal{D}_{\mathrm{KL}}(\pi(.|\tilde{s}), \pi(.|\tilde{s}))  = \max_{\tilde{s} \in \mathcal{B}_{\epsilon}(s)} \sum_{i=1}^{\mathcal{A}} \frac{[\mu_{\theta}(\tilde{s}) - \mu_{\theta}(s)]_{i}^2}{\Sigma_{ii}} = \max_{\tilde{s} \in \mathcal{B}_{\epsilon}(s)} \mathcal{R}(\tilde{s})
\end{equation}

\begin{equation}
    \tilde{s}_{k+1} = \text{proj}\left(s_k - \eta \nabla_{\tilde{s}_k}\mathcal{R}(\tilde{s}_k) + \sqrt{\frac{2 \eta}{\beta}}\zeta \right), \tilde{s}_0 = s, k = 0, \dots K-1
\end{equation}
Here $\zeta  \sim\mathcal{N}(0,1) $, $\eta = \frac{\epsilon_t}{K}$ is the step size parameter and $\beta > 0$ is a noise parameter.

The original codebase of SA-Reg has a bug in the way it implemented SGLD. SGLD was implemented as in Eq \ref{eq:old-sa-reg-sgld}. As most previous codebases are built on top of SA-Reg codebase, they all inherited this bug.

\begin{equation}
    \label{eq:old-sa-reg-sgld}
    \tilde{s}_{k+1} = \text{proj}\left(s_k - \eta \nabla_{\tilde{s}_k}\mathcal{R}(\tilde{s}_k) + \textcolor{red}{\eta \times}\sqrt{\frac{2 \eta}{\beta}}\zeta \right), \tilde{s}_0 = s, k = 0, \dots K-1
\end{equation}

Given that SA-Reg uses an $\epsilon$-scheduler during training, where the maximum size of the perturbations slowly increases until it reaches $\epsilon$, this leads to having very small step sizes in the early stages of training. In that case, SGLD is nearly equivalent to gradient descent which is problematic given that the gradient at the first step is zero $\nabla_{\tilde{s}_0 = s}\mathcal{R}(\tilde{s}_0) = 0$.

\subsubsection*{Power Iteration method}
To solve the maximization problem in \ref{eq:sa-reg-max-problem} using PI \cite{DBLP:journals/pami/MiyatoMKI19} we do the following:

\begin{equation}
    \max_{\tilde{s} \in \mathcal{B}_{\epsilon}(s)} \mathcal{D}_{\mathrm{KL}}(\pi(.|\tilde{s}), \pi(.|\tilde{s})) = \mathcal{R}(s+d_{K}) 
\end{equation}
Where:
\begin{equation}
    d_{k+1} = \text{sign}\left(\nabla_{\xi d_k} \mathcal{R}(s + d_k) \right), d_0 \sim\mathcal{N}(0,1), k=0, \dots, K-1 
\end{equation}

Here $\xi$ is a small number (e.g., $10^{-3}$, $10^{-5}$).PI has the advantage of needing only one iteration to have good performance.

In figure \ref{fig:sgld_pi} we keep track of the estimations of PI with 1 iteration and SGLD with 10 iterations when training using convex relaxation. There is a huge gap between convex relaxations and SGLD. PI gives a better estimation while using just 1 iteration.
\begin{figure}[!h]
    \centering
    \includegraphics[width=0.45\textwidth]{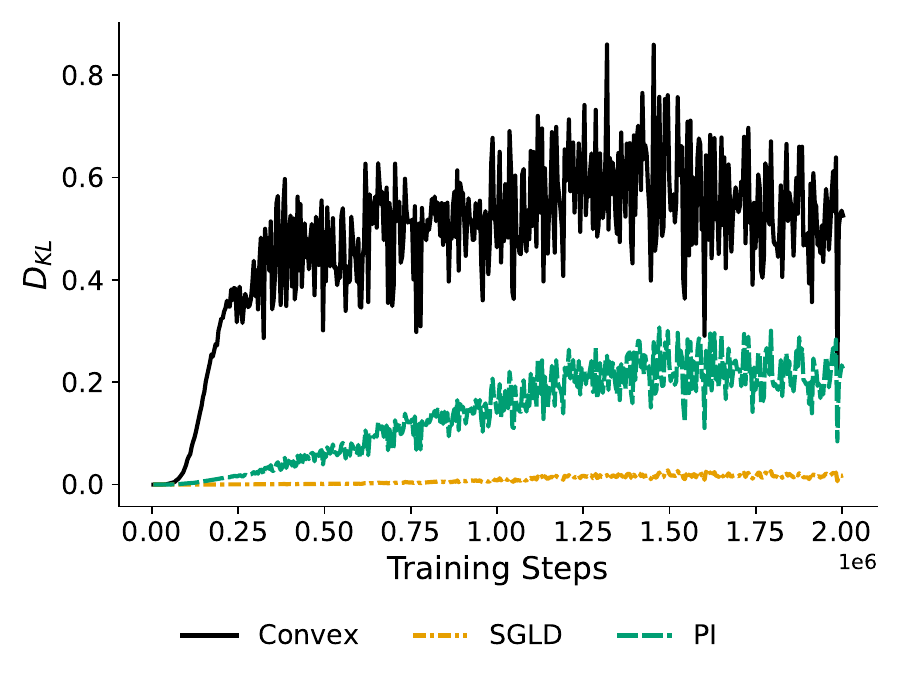}
    \caption{Comparison between convex relaxation, PI and SGLD.}
    \label{fig:sgld_pi}
    \end{figure}

\begin{algorithm}[!h]
    \caption{SA-Reg}
    \KwIn{Total steps $N$, rollout length $T$, epochs $K$, mini-batch size $M$, policy network $\pi_\theta$, value network $V_\phi$, \textcolor{teal}{ attack budget $\epsilon$, robustness coefficient $\kappa$}.}
    \BlankLine
    $n \gets 0$ \\
    \textcolor{teal}{ Initialize the $\epsilon$-scheduler $\epsilon_t(\epsilon,n)$}. \\
    \While{$n < N$}{
    
        \DataCollection{
            Collect rollout $\mathcal{D} = \{(s_t, a_t, r_t, s_{t+1})\}_{t=1}^{T}$ using $\pi_{\theta}$, and update $n \gets n + T$ 
            
            Compute rewards-to-go $\hat{R}_t$ and advantages $\hat{A}_t$
            
        }
        \textcolor{teal}{Update the $\epsilon$-scheduler $\epsilon_t(\epsilon,n)$} \\
        \For{$\text{epoch} = 1, \dots, K$}{
            Sample mini-batch $\mathcal{B} \subset \mathcal{D}$ of size $M$\\
            
        \textcolor{teal}{
        \PolicyUpdate{
            Compute the regularization loss (convex-relax, SGLD, or PI):\\
                \BlankLine
                $\mathcal{L}_{\mathrm{robust}}(\theta) = \sum_{s \in \mathcal{B}} \max_{\tilde{s} \in \mathcal{B}_{\epsilon}(s)} \mathcal{D}_{\mathrm{KL}}(\pi(.|s), \pi(.|\tilde{s}))$ \\
                Compute standard PPO loss:
                \BlankLine
                $\mathcal{L}_{\text{DRL}}(\theta) = \sum_{\mathcal{B}} \min\!\left(r_t(\theta)\,\hat{A}_t,\operatorname{clip}(r_t(\theta),\, 1 \pm \epsilon)\,\hat{A}_t\right)$ \\
                Update policy network using:
                \BlankLine
                $\mathcal{L}(\theta) = \mathcal{L}_{\text{DRL}}(\theta) + \kappa \mathcal{L}_{\mathrm{robust}}(\theta)$ \\
            }
        }
        
        \CriticUpdate{
                Update value network using: 
                $\mathcal{L}^{\text{critic}}(\phi)=\sum_{\mathcal{B}}(V_\phi(s_t)-\hat{R}_t)^2$
            }
        }
    }
    \label{algo:sa-reg}
\end{algorithm}

\subsection{Radial}

Radial original codebase used a different robustness loss than the one reported in the original paper. The paper reports the following loss: 

\begin{equation}
    \mathcal{L}^\mathrm{Radial}(\theta) = \mathbb{E}_{(s_t,a_t)} \left[ -
    \min \left(
    \frac{\pi^{\mathrm{wc}}_\theta(a_t \mid s_t)}{\pi_{\theta_{\text{old}}}(a_t \mid s_t)} A_t,
    \mathrm{clip}\big(\frac{\pi^{\mathrm{wc}}_\theta(a_t \mid s_t)}{\pi_{\theta_{\text{old}}}(a_t \mid s_t)}, 1 - \epsilon, 1 + \epsilon\big)A_t
    \right)
    \right]
\end{equation}

However, the codebase used the following loss:

\begin{equation}
    \begin{aligned}
    \mathcal{L}^\mathrm{Radial}(\theta)
    = \mathbb{E}_{(s_t,a_t)} \Bigg[
    - \min \Big(
    &\underline{r}(\theta) A_t,\mathrm{clip}\big(\underline{r}(\theta), 1 \pm \epsilon\big) A_t, \\
    &\overline{r}(\theta) A_t, 
    \mathrm{clip}\big(\overline{r}(\theta), 1 \pm \epsilon\big) A_t, \\
    &r(\theta) A_t, 
    \mathrm{clip}\big(r(\theta), 1 \pm \epsilon\big) A_t
    \Big)
    \Bigg]
    \end{aligned}
    \end{equation}
Moreover, the policy network is updated using a convex combination between the standard DRL loss and the Radial loss :
\begin{equation}
    \mathcal{L}(\theta) = \kappa\mathcal{L}_{\mathrm{DRL}}(\theta) + (1-\kappa) \mathcal{L}_{\mathrm{robust}}(\theta),  \quad \kappa \in (0,1]
\end{equation}

We report the pseudocode of Radial in Algorithm \ref{algo:radial}. We implemented the loss as the one in the original paper. And users can choose if they want to use convex combination loss or not.

\begin{algorithm}[!h]
    \caption{Radial}
    \KwIn{Total steps $N$, rollout length $T$, epochs $K$, mini-batch size $M$, policy network $\pi_\theta$, value network $V_\phi$, \textcolor{teal}{ attack budget $\epsilon$, robustness coefficient $\kappa$}.}
    \BlankLine
    $n \gets 0$ \\
    \textcolor{teal}{ Initialize the $\epsilon$-scheduler $\epsilon_t(\epsilon,n)$}. \\
    \While{$n < N$}{
    
        \DataCollection{
            Collect rollout $\mathcal{D} = \{(s_t, a_t, r_t, s_{t+1})\}_{t=1}^{T}$ using $\pi_{\theta}$, and update $n \gets n + T$ 
            
            Compute rewards-to-go $\hat{R}_t$ and advantages $\hat{A}_t$
            
        }
        \textcolor{teal}{Update the $\epsilon$-scheduler $\epsilon_t(\epsilon,n)$} \\
        \For{$\text{epoch} = 1, \dots, K$}{
            Sample mini-batch $\mathcal{B} \subset \mathcal{D}$ of size $M$\\
            
        \textcolor{teal}{
        \PolicyUpdate{
            Compute the worst case policy $\pi_{\text{wc}}$ using Equation \ref{radial-wc-policy}, and compute the regularization loss:\\
            \BlankLine
            $\mathcal{L}_{\text{robust}}(\theta) = \sum_{\mathcal{B}} \min\!\left(r^{\text{wc}}_t(\theta)\,\hat{A}_t,\operatorname{clip}(r^{\text{wc}}_t(\theta),\, 1 \pm \epsilon)\,\hat{A}_t\right)$ \\
            Compute standard PPO loss:
            \BlankLine
            $\mathcal{L}_{\text{DRL}}(\theta) = \sum_{\mathcal{B}} \min\!\left(r_t(\theta)\,\hat{A}_t,\operatorname{clip}(r_t(\theta),\, 1 \pm \epsilon)\,\hat{A}_t\right)$ \\
            Update policy network using:
            \BlankLine
            $\mathcal{L}(\theta) = \mathcal{L}_{\text{DRL}}(\theta) + \beta \mathcal{L}_{\mathrm{robust}}(\theta)$ \\
            }
        }
        
        \CriticUpdate{
                Update value network using: 
                $\mathcal{L}^{\text{critic}}(\phi)=\sum_{\mathcal{B}}(V_\phi(s_t)-\hat{R}_t)^2$
            }
        }
    }
    \label{algo:radial}
\end{algorithm}

\subsection{SA-ATLA}
The Alternating Training with Learned Adversary pseudocode is in Algorithm \ref{algo:atla}. In our implementation of SA-ATLA, we provide two design for the adversarial policy:
\begin{itemize}
    \item Squashed: Wa add a Tanh activation function at the last layer of the policy network so the outputs of the policy are in $[-1,1]$. The perturbed states is $\tilde{s}_t = s_t + \epsilon_t \times v(s_t)$. We correct the probabilities.
    \item No squashing: We do not constrain the output of the policy, but we constrain the perturbation to not exceed the attack budget.
\end{itemize}

Moreover, use use $\epsilon$-scheduling to collect rollouts to train the robust agent. The adversary is instead trained with the full attack budget. The scheduling is important to ensure the convergence of the agent. The original implementation did not use scheduling.  

\begin{algorithm}[!h]
    \caption{ATLA}
    \label{algo:atla}
    \KwIn{Total steps $N$, rollout length $T$, epochs $K$, mini-batch size $M$, policy network $\pi_\theta$, value network $V_\phi$, \textcolor{teal}{ adversary policy network $v_{\theta^{\mathrm{adv}}}$, adversary value network $V_{\phi^{\mathrm{adv}}}$,attack budget $\epsilon$, robustness coefficient $\kappa$}.}
    \BlankLine
    $n \gets 0$, \textcolor{teal}{$n^{\mathrm{adv}} \gets 0$} \\
    \textcolor{teal}{ Initialize the $\epsilon$-scheduler $\epsilon_t(\epsilon,n)$}. \\
    \While{$n < N$ and \textcolor{teal}{$n^{\mathrm{adv}} < N$}}{
        \textcolor{teal}{
            \TrainAdversary{
            \DataCollection{
                Collect rollout $\mathcal{D}^{\mathrm{adv}} = \{(s_t, v_{\theta^{\mathrm{adv}}}(s_t), -r_t, s_{t+1})\}_{t=1}^{T}$ using $v_{\theta^{\mathrm{adv}}}$ as perturbation of size $\epsilon$  and $\pi_{\theta}$ to step the environment, and update $n^{\mathrm{adv}} \gets n^{\mathrm{adv}} + T$ \\
                Compute rewards-to-go $\hat{R}_t^{\mathrm{adv}}$ and advantages $\hat{A}_t^{\mathrm{adv}}$}
            Update the $\epsilon$-scheduler $\epsilon_t(\epsilon,n)$ \\
            \For{$\text{epoch} = 1, \dots, K$}{
                Sample mini-batch $\mathcal{B} \subset \mathcal{D}^{\mathrm{adv}}$ of size $M$\\
                \PolicyUpdate{
                        Update policy network using:
                        \BlankLine
                        $\mathcal{L}_{\text{actor}}(\theta^{\mathrm{adv}}) = \sum_{\mathcal{B}} \min\!\left(r_t(\theta^{\mathrm{adv}})\,\hat{A}_t^{\mathrm{adv}},\operatorname{clip}(r_t(\theta^{\mathrm{adv}}),\, 1 \pm \epsilon)\,\hat{A}_t^{\mathrm{adv}}\right)$}
                \CriticUpdate{
                        Update value network using: 
                        $\mathcal{L}_{\text{critic}}(\phi^{\mathrm{adv}})=\sum_{\mathcal{B}}(V_\phi^{\mathrm{adv}}(s_t)-\hat{R}_t^{\mathrm{adv}})^2$}
            }
        }
        }
        \DataCollection{
            \textcolor{teal}{Collect rollout under attack $\mathcal{D} = \{(v_{\theta^{\mathrm{adv}}}(s_t), a_t, r_t, v_{\theta^{\mathrm{adv}}}(s_{t+1}))\}_{t=1}^{T}$ using $\pi_{\theta}$, and update $n \gets n + T$} 
            
            Compute rewards-to-go $\hat{R}_t$ and advantages $\hat{A}_t$}
        \textcolor{teal}{Update the $\epsilon$-scheduler $\epsilon_t(\epsilon,n)$} \\
        \For{$\text{epoch} = 1, \dots, K$}{
            Sample mini-batch $\mathcal{B} \subset \mathcal{D}$ of size $M$\\
        \PolicyUpdate{
                Update policy network using:
                \BlankLine
                $\mathcal{L}_{\text{actor}}(\theta) = \sum_{\mathcal{B}} \min\!\left(r_t(\theta)\,\hat{A}_t,\operatorname{clip}(r_t(\theta),\, 1 \pm \epsilon)\,\hat{A}_t\right)$
            }

        \CriticUpdate{
                Update value network using: 
                $\mathcal{L}_{\text{critic}}(\phi)=\sum_{\mathcal{B}}(V_\phi(s_t)-\hat{R}_t)^2$
            }
        }
    }
\end{algorithm}

\subsection{PA-ATLA}
PA-ATLA follows the same training framework as SA-ATLA. The difference is that the adversary learns the worst-action, which is then used to conducted a targeted attacking using FGSM. To conduct the FGSM, we use the adversarial losses recommended in the original paper for continuous actions (Section C.3 in \cite{DBLP:conf/iclr/SunZLH22}). The original paper used the following to generate the perturbations:
\begin{equation}
    \argmax_{\tilde{s} \in \mathcal{B}_{\epsilon}(s)} \| \pi(\tilde{s}) - \pi(s)\|_{2} +\text{CosineSim}(\pi(\tilde{s}) - \pi(s),\hat{a})
\end{equation}
This loss is was recommended for discrete actions. \texttt{advrl} uses the following loss against deterministic policy. We also support PGD generation. 

\begin{equation}
    \argmax_{\tilde{s} \in \mathcal{B}_{\epsilon}(s)} \| \pi(\tilde{s}) - \pi(s)\|_{2}
\end{equation}
\subsection{WocaR-RL}
The pseudocode of WocaR-RL is provided in Algorithm 5. The worst-case critic network is updated using a loss similar to Double Q-learning error : q-network $\underline{Q}_{\psi}$ is used to select the minimum action and evaluated using the target q-network:

\[\mathcal{L}_{\text{worst-case}}(\psi) =r_t + \gamma Q_{\psi^{-}}(s_{t+1},\argmin_{a \in \mathcal{A}_{\mathrm{adv}}(s_{t+1}, \pi)} \underline{Q}_{\psi}(s_{t+1},a)) - Q_{\psi}(s_t,a_t)\] 

In the original implementation, $\mathcal{A}_{\mathrm{adv}}(s,\pi) = [\underline{\mu}_{\theta}(s), \overline{\mu}_{\theta}(s) ]$ is computed using Interval Bound Propagation (IBP) with a fixed perturbation $\epsilon=0.01$ across all the environments. The minimum is computed using 50-step PGD with a step size of $ (\underline{\mu}_{\theta}(s) - \overline{\mu}_{\theta}(s))/50$. We follow the same logic, except that (1) the user can use IBP, CROWN or CROWN-IBP (2) the perturbations is not fixed at $0.01$ and we use $\epsilon_t$-scheduling. 

\begin{algorithm}[!h]
    \caption{WocaR-RL}
    \KwIn{Total steps $N$, rollout length $T$, epochs $K$, mini-batch size $M$, policy network $\pi_\theta$, value network $V_\phi$, \textcolor{teal}{attack budget $\epsilon$, robustness coefficients $\kappa_1$ and $\kappa_2$, worst-case critic $\underline{Q}_{\psi}$ and target worst-case critic $\underline{Q}_{\psi^{-}}$ }}
    \BlankLine
    $n \gets 0$, \\
    \textcolor{teal}{ Initialize the $\epsilon$-scheduler $\epsilon_t(\epsilon,n)$}\\
    \While{$n < N$}{
    
        \DataCollection{
            Collect rollout $\mathcal{D} = \{(s_t, a_t, r_t, s_{t+1})\}_{t=1}^{T}$ using $\pi_{\theta}$, and update $n \gets n + T$ 
            
            Compute rewards-to-go $\hat{R}_t$ and advantages $\hat{A}_t$
            
        }
        \textcolor{teal}{
        \WorstCaseCritic{
            Compute $\mathcal{A}_\mathrm{adv}$: lower $\underline{\pi}_\theta$ and upper $\overline{\pi}_\theta$  bound of the policy with pertubations of size $\epsilon_t$ \\
            Compute target actions $\hat{a}$ using PGD:
            \[\hat{a} = \argmin_{a \in \mathcal{A}_{\mathrm{adv}}(s_{t+1}, \pi)} \underline{Q}_{\psi}(s_{t+1},a)\] 
            Update the worst-case critic using: $\mathcal{L}_{\text{worst-case}}(\psi) =\sum_{\mathcal{B}} \left(r_t + \gamma Q_{\psi^{-}}(s_{t+1},\hat{a}_{t+1}) - Q_{\psi}(s_t,a_t)\right)^2 $
        }}
        \For{$\text{epoch} = 1, \dots, K$}{
            Sample mini-batch $\mathcal{B} \subset \mathcal{D}$ of size $M$\\
        \textcolor{teal}{
        \PolicyUpdate{
            Compute state importance weights using PGD:
            \[ w(s) = V_{\phi}(s) - \min_{a \in \mathcal{A}}\underline{Q}_{\psi}(s,a)\] 
            Compute the weighted SA-Reg loss:
            $\mathcal{L}_{\mathrm{SA-Reg}}(\theta) = \sum_{s \in \mathcal{B}} w(s)  \max_{\tilde{s} \in \mathcal{B}_{\epsilon}(s)} \mathcal{D}_{\mathrm{KL}}(\pi(.|s), \pi(.|\tilde{s}))$ \\
            Compute the robust PPO loss:
            \BlankLine
            $\mathcal{L}_{\text{wc}}(\theta) = \sum_{\mathcal{B}} \min\!\left(r_t(\theta)\,\hat{A}_t + \kappa_1 \underline{Q}_{\psi}(s_t,a_t) ,\operatorname{clip}(r_t(\theta),\, 1 \pm \epsilon)\,\hat{A}_t + \kappa_1 \underline{Q}_{\psi}(s_t,a_t) \right)$\\
            Update policy network using:
            $\mathcal{L}(\theta) = \mathcal{L}_{\text{wc}}(\theta) + \kappa_2 \mathcal{L}_{\mathrm{SA-Reg}}(\theta) $ 
            }}

        \CriticUpdate{
                Update value network using: 
                $\mathcal{L}_{\text{critic}}(\phi)=\sum_{\mathcal{B}}(V_\phi(s_t)-\hat{R}_t)^2$
            }
        }
    }
    \label{algo:wocar}
\end{algorithm}

\subsection{Advis}
\label{sec:supp_advis}

The pseudocode of Advis is presented in Algorithm 6. We implemented three importance sampling techniques that have lower variance that the ordinary importance sampling estimator: Per-Decision IS (PDIS) (Eq \ref{eq:supp_pdis}), Weighted IS (WIS) (Eq \ref{eq:supp_wis}), and Weighted Per-Decision IS (WPDIS) (Eq \ref{eq:supp_wpdis}).
\begin{equation}
    \hat{J}_{\mathrm{PDIS}}(\pi) = \frac{1}{N} \sum_{i=1}^{N} \sum_{t=0}^{T-1} \rho_t(\tau_i)\, \gamma^t r_t^i
    \label{eq:supp_pdis}
\end{equation}
\begin{equation}
    \hat{J}_{\mathrm{WIS}}(\pi) = \sum_{i=1}^{N} \frac{\rho_{T-1}(\tau_i)}{\sum_{j=1}^{N} \rho_{T-1}(\tau_j)} R(\tau_i) 
    \label{eq:supp_wis}
\end{equation}
\begin{equation}
    \hat{J}_{\mathrm{WPDIS}}(\pi) = \sum_{t=0}^{T-1} \gamma^t \sum_{i=1}^{N} \frac{\rho_t(\tau_i)}{\sum_{j=1}^{N} \rho_t(\tau_j)} r_t^i
    \label{eq:supp_wpdis}
\end{equation}

In contrast to previous algorithms, Advis collect full episodes rather than a fixed number of steps. The training is done using mini- batches: consecutive windows of the collected episodes. We provide two options to compute the importance weights: (1) importance weights only consider the probabilities of states in the sampled mini-batch, or (2) the importance weights of the current mini-batch are initialized using importance weights from the previous mini-batch, except for the first mini-batch. However, the gradients are propagated only through the probabilities of the sampled window. We additionally give the option to use average rewards ($\gamma = 1$), as it may otherwise cause estimates to vanish to zero in environments with long episodes (e.g., $0.99^{999} \approx 4.3 \times 10^{-5}$).

To ensure numerical stability, we can clip the log-importance weights when using PDIS or WPDIS. For WIS estimators, we compute normalized importance weights from the log-importance weights log-sum-exp trick for numerical stability. 
\begin{algorithm}[thb]
    \caption{Advis}
    \KwIn{Total steps $T$, \textcolor{teal}{number of episodes $N$}, epochs $K$, mini-batch size $M$, policy network $\pi_\theta$, value network $V_\phi$, \textcolor{teal}{ attack budget $\epsilon$, robustness coefficient $\kappa$}.}
    \BlankLine
    $t \gets 0$ \\
    \textcolor{teal}{ Initialize the $\epsilon$-scheduler $\epsilon_t(\epsilon,n)$}. \\
    \While{$t < T$}{
        \textcolor{teal}{
        \DataCollection{
            Collect rollout $N$ episodes $\mathcal{D} = \{(s^{i}_0,a^{i}_o, \dots, s^{i}_t, a^{i}_t, r^{i}_t,\dots, s^{i}_{L_i})\}_{i=1}^{N}$ using $\pi_{\theta}$, and update $t \gets t + \sum_i L_i$ \\
            Compute rewards-to-go $\hat{R}_t$ and advantages $\hat{A}_t$
        }}
        \textcolor{teal}{Update the $\epsilon$-scheduler $\epsilon_t(\epsilon,n)$} \\
        \For{$\text{epoch} = 1, \dots, K$}{
            Sample mini-batch $\mathcal{B} \subset \mathcal{D}$ of size $M$\\
            
        \textcolor{teal}{
        \PolicyUpdate{
            Compute the worst case policy $\pi_{\text{wc}}$ using Equation \ref{radial-wc-policy}\\
            Compute worst-case performance using importance sampling $\hat{J}_{\mathrm{IS}}(\pi_{\mathrm{wc}})$\\
            Compute adversarial loss $\mathcal{L}_{\mathrm{robust}}^{\mathrm{advis}}$ (Eq \ref{eq:wc_loss}, \ref{eq:mse_loss}, or \ref{eq:relu_loss})\\
            Compute standard PPO loss:
            \BlankLine
            $\mathcal{L}_{\text{DRL}}(\theta) = \sum_{\mathcal{B}} \min\!\left(r_t(\theta)\,\hat{A}_t,\operatorname{clip}(r_t(\theta),\, 1 \pm \epsilon)\,\hat{A}_t\right)$ \\
            Update policy network using:
            \BlankLine
            $\mathcal{L}(\theta) = \mathcal{L}_{\text{DRL}}(\theta) + \beta \mathcal{L}_{\mathrm{robust}}^{\mathrm{advis}}$ \\
            }
        }
        
        \CriticUpdate{
                Update value network using: 
                $\mathcal{L}^{\text{critic}}(\phi)=\sum_{\mathcal{B}}(V_\phi(s_t)-\hat{R}_t)^2$
            }
        }
    }
    \label{algo:advis}
\end{algorithm}

In the following, we provide the proof for Lemma \ref{lem:var}. The proof is based on results presented in  \cite{DBLP:conf/nips/MetelliPFR18,DBLP:journals/jmlr/MetelliPMR20}. We make use of the existing general results and apply them to the context of worst-case performance estimation.

\textit{Proof} Let $P\equiv p(.|\pi^{\mathrm{wc}})$ and $Q \equiv p(.|\pi)$, we know from Equation 7 in \cite{DBLP:journals/jmlr/MetelliPMR20} that: 
\begin{equation}
    d_2(P|Q) = d_2(p(.|\pi^{\mathrm{wc}})|p(.|\pi)) = \frac{N}{\widehat{\mathrm{ESS}}} = N \sum_{i=0}^{N} \left(\frac{\rho_{T-1}^{\mathrm{advis}}(\tau_i)} {\sum_{j=1}^{N}\rho_{T-1}^{\mathrm{advis}}(\tau_j)}\right)^2
\end{equation}

Moreover, from Lemma 4.1 in \cite{DBLP:conf/nips/MetelliPFR18}, and  using $f \equiv R :\mathcal{T} \to \mathbb{R}$ the discounted return of trajectories (we assume they are finite), we have 
\begin{equation}
    \mathbb{V} \mathrm{ar}\left[\hat{J}_{\mathrm{IS}}\right] \leq \frac{\|R\|_{\infty}^{2}}{N} d_{2}\left(p(.| \pi^{\mathrm{wc}}),p(. | \pi)\right)
\end{equation}

\hfill $\square$

\texttt{advrl} supports using the exponentiated 2-Rényi Divergence in Eq \ref{eq:reney-divergence-approx} as an additional regularization loss when using ordinary IS but also WIS estimators. The exponentiated 2-Rényi appears also in the upper bound of the Mean Squared Error of the weighted importance sampling estimator, as indicated in Proposition 8 in \cite{DBLP:journals/jmlr/MetelliPMR20}.

We conduced experiments using this additional regularization term and present the $\text{CVaR}_{\alpha}$ curves in Figure \ref{fig:reg_ablation}. In HalfCheetah and Walker2d adding the regularization improves the performance by 600. Similar to SA-Reg performance in Hopper ( as reported in Table \ref{tab:results}) ,the trajectory-level regularization did not achieve satisfactory results in this particular environment.

\begin{figure}[thb]
    \centering
    \includegraphics[width=0.95\textwidth]{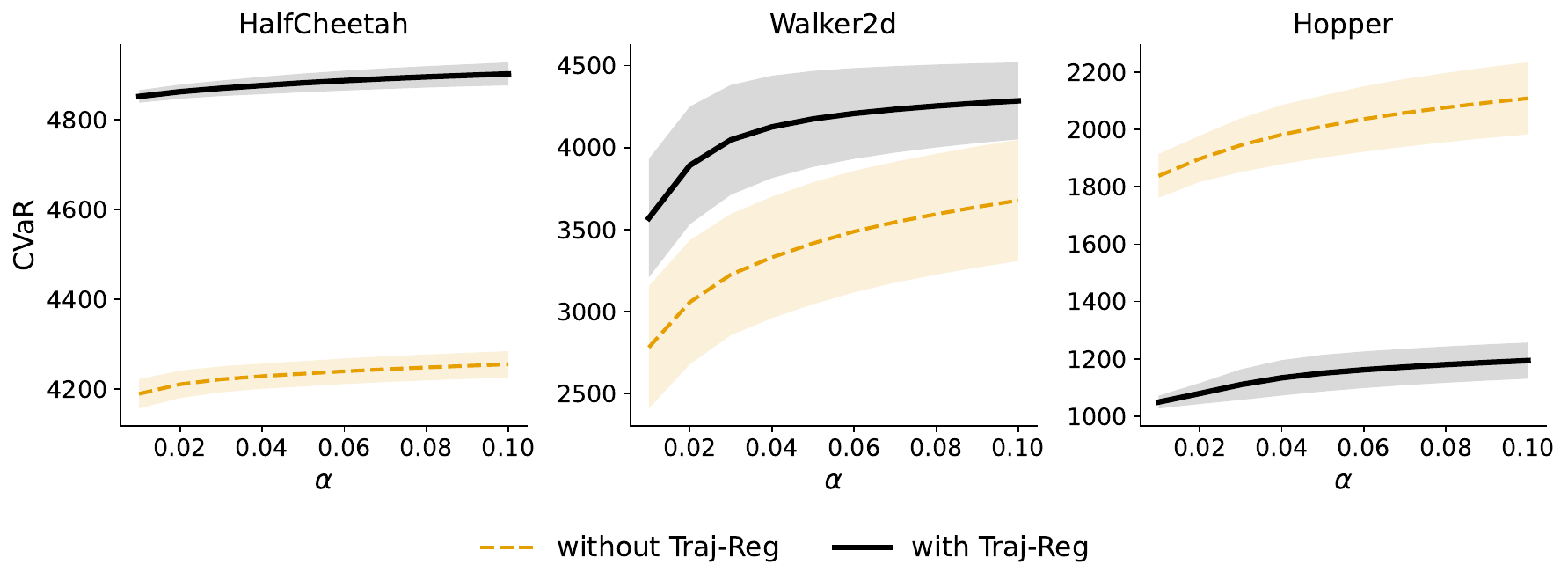}
    \caption{Effect of adding trajectory-level regularization.}
    \label{fig:reg_ablation}
\end{figure}
\section{Experiment details}
\subsection{Adversarial evaluation}
All the agents are trained for 2 million steps. We use the same network architecture across all the algorithms and environments: MLP network with 2 layers, each with 64 neurons. We use Adam optimizer. We use the same $\epsilon$-scheduler for all the agents: we slowly increase $\epsilon_t$ from zero until it reaches the full attack budget $\epsilon$ after 75\% of the total training steps. For the baselines, training is done using 2 environment, while Advis uses 5 environments. 

For SA-Reg, the grid search considers robustness coefficient $\kappa$ in $\{ 0.001, 0.01, 0.05, 0.1\}$ ,actor and critic learning rate in $\{0.0001, 0.00025, 0.0004\}$ and entropy coefficient in $\{0, 0.001\}$. We use the same values for Radial. For WocaR-RL, we consider  the actor, the critic, and the worst-case Q-network learning rates in $\{0.0001, 0.00025, 0.0004\}$, robust coefficient in $\{ 0.1, 0.5, 0.8\}$, soft polyak coefficient in $\{ 0.001, 0.05\}$ and entropy coefficient in $\{0, 0.001\}$.

For SA-ATLA and PA-ATLA, we consider actor and critic learning rates in $\{0.0001, 0.00025, 0.0004\}$, adversary’s actor and critic learning rates in $\{0.0001, 0.00025, 0.001\}$, entropy coefficient in $\{0, 0.001\}$, and adversary’s entropy coefficient in $\{0, 0.001\}$. For SA-ATLA we consider squashing the actions of the adversary in $\{\text{True}, \text{False}\}$.

For Advis, we use actor learning rates in  $\{ 0.00025, 0.0004, 0.0006\}$ (we need higher learning rates because Advis uses more environments that the baselines) critic learning rates in $\{0.0001, 0.00025, 0.0004\}$, robust coefficient in $\{0.01, 0.3, 0.5, 1, 1.5\}$, use discounted returns in  $\{\text{True}, \text{False}\}$, if we do not use discounted returns we add the variant where we compute the importance weights over the full trajectory.
 
\subsection{Adversarial evaluation}
\label{sec:detail_attacks}
\paragraph{ Non-trainable attacks:} To generate random attacks we use uniform distribution, gaussian distribution, fixed perturbations, and discrete perturbations. For Critic attacks, we use 10-step PGD to generate perturbations. For MAD attacks, we use SGLD with $\beta \in \{10^3, 10^4, 10^5\} $ for 10 iterations and PI with $\xi \in \{0.0001, 0.00001, 0.01\}$.

\paragraph{Learned adversaries:} We perform a grid search over the hyperparameters of the learned adversaries. 
Table~\ref{tab:attack-grids} summarizes the search spaces considered for Robust Sarsa (RS), SA-RL and PA-AD. SA-RL and PA-AD adversary are trained for 2 million steps using PPO. For RS attacks, we train the for 200000 steps where the robust coefficient is zero, otherwise we train for 1 million step. It takes 1h20min to evaluate against SA-RL or PA-AD attacker. This duration includes both the training of the adversary and final evaluations using 1000 episodes. RS require less than 30min.

\paragraph{Random search:} For the random search used in Section \ref{sec:experiments}
, we sample hyperparameters from log-uniform distributions: $\texttt{lr} \sim \log\mathcal{U}(10^{-4}, 5\cdot10^{-3})$ and $\texttt{polyak} \sim \log\mathcal{U}(10^{-3}, 5\cdot10^{-2})$. The robustness coefficient is set to zero with probability $1/6$, and otherwise sampled as $\log\mathcal{U}(10^{-3}, 1)$.

\begin{table*}[h]
    \centering
    \caption{Hyperparameter search spaces for learned attacks.}
    \begin{tabular}{lll}
    \toprule
    \textbf{Method} & \textbf{Hyperparameter} & \textbf{Values} \\
    \midrule
    \multirow{4}{*}{RS}
    & Learning rate    & $\{1\text{e-}4,\ 2.5\text{e-}4,\ 5\text{e-}4,\ 1\text{e-}3,\ 2.5\text{e-}3,\ 5\text{e-}3\}$ \\
    & Activations       & $\{\texttt{relu},\ \texttt{tanh}\}$ \\
    & Robust coefficients          & $\{0,\ 0.001,\ 0.01,\ 0.1,\ 0.5,\ 1\}$ \\
    & Polyak updates           & $\{0.001,\ 0.005,\ 0.01,\ 0.05\}$ \\
    \midrule
    \multirow{5}{*}{SA-RL}
    & Actor learning rate         & $\{1\text{e-}4,\ 5\text{e-}4,\ 1\text{e-}3,\ 2.5\text{e-}3,\ 5\text{e-}3\}$ \\
    & Critic learning rate        & $\{1\text{e-}4,\ 5\text{e-}4,\ 1\text{e-}3,\ 2.5\text{e-}3,\ 5\text{e-}3\}$ \\
    & Entropy coefficient    & $\{0,\ 1\text{e-}5\}$ \\
    & Squash actions           & $\{\texttt{True},\ \texttt{False}\}$ \\
    & Initial standard deviation     & $\{-1,\ -2\}$ \\
    \midrule
    \multirow{5}{*}{PA-AD}
    & Actor learning rate       & $\{1\text{e-}4,\ 5\text{e-}4,\ 1\text{e-}3,\ 2.5\text{e-}3,\ 5\text{e-}3,\ 1\text{e-}2\}$ \\
    & Critic learning rate     & $\{1\text{e-}4,\ 5\text{e-}4,\ 1\text{e-}3,\ 2.5\text{e-}3,\ 5\text{e-}3,\ 1\text{e-}2\}$ \\
    & Activation       & $\{\texttt{relu},\ \texttt{tanh}\}$ \\
    & Entropy coefficient    & $\{0,\ 1\text{e-}3,\ 1\text{e-}4 \}$ \\
    & Clip range       & $\{0.2,\ 0.3\}$ \\
    \bottomrule
    \end{tabular}
    \label{tab:attack-grids} 
    \end{table*}

\end{document}